\renewcommand{\figurename}{Algorithm}
\newtheorem{definition}{Definition}
\newtheorem{theorem}{Theorem}
\newtheorem{lemma}{Lemma}
\newtheorem{proposition}{Proposition}
\newcommand{\mb}{\mathbb}
\newcommand{\mc}{\mathcal}
\newcommand{\prob}{\mb{P}}
\newcommand{\event}{\mc{E}}
\newcommand{\argmax}{\text{argmax}}
\newcommand{\argmin}{\text{argmin}}
\newcommand{\E}{\mb{E}}
\renewcommand{\P}{\mb{P}}
\newcommand{\hatparam}{\widehat{\theta}}
\newcommand{\thetahat}{\widehat{\theta}}
\newcommand{\thetastar}{\theta^{\ast}}
\newcommand{\xset}{\mc{X}}
\newcommand{\yset}{\mc{Y}}
\newcommand{\sset}{\mc{S}}
\newcommand{\zhatset}{\widehat{\mc{Z}}}
\newcommand{\conv}{\text{conv}}
\newcommand{\arm}{x}
\newcommand{\yarm}{y}
\newcommand{\reward}{r}
\newcommand{\armseq}{\mathbf{x}}
\newcommand{\design}{\lambda}
\newcommand{\simplex}{\Sigma}
\newcommand{\support}{p}
\newcommand{\armindex}{i}
\newcommand{\sampindex}{j}
\newcommand{\phaseindex}{t}
\newcommand{\randnumsamples}{N}
\def\calX{\mathcal{X}}
\def\calzhat{\widehat{\mathcal{Z}}}
\def\calZ{\mathcal{Z}}
\def\calC{\mathcal{C}}
\def\calY{\mathcal{Y}}
\def\calS{\mathcal{S}}
\def\calN{\mathcal{N}}
\def\calE{\mathcal{E}}
\def\calV{\mathcal{V}}
\newcounter{alg}
\renewcommand{\simplex}{
  \mathchoice
    {\includegraphics[height=1.4ex, trim= 0pt 30pt 0pt 0pt]{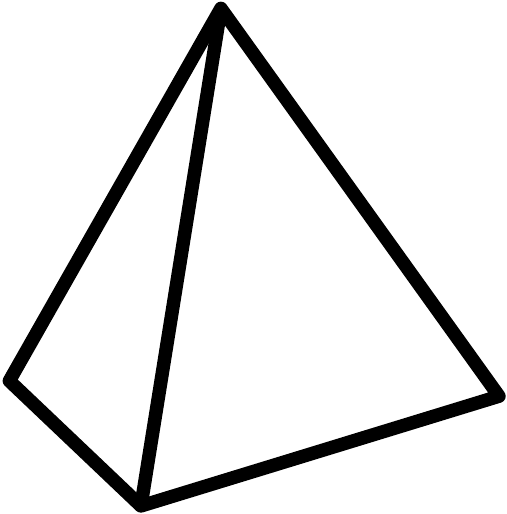}} 
    {\includegraphics[height=1.4ex, trim= 0pt 25pt 0pt 0pt]{simplex.pdf}} 
    {\includegraphics[height=1.1ex, trim=0pt 30pt 0pt 0pt]{simplex.pdf}} 
    {\includegraphics[height=.8ex, trim=0pt 30pt 0pt 0pt]{simplex.pdf}} 
}
\title{Sequential Experimental Design for Transductive Linear Bandits}
\author{%
  Tanner Fiez
  \thanks{Department of Electrical and Computer Engineering, University of Washington, fiezt@uw.edu}
  \and
  Lalit Jain\thanks{Paul G. Allen School of Computer Science \& Engineering, University of Washington, lalitj@cs.washington.edu, contribution shared equally among first two authors}
  \and
  Kevin Jamieson\thanks{Paul G. Allen School of Computer Science \& Engineering, University of Washington, jamieson@cs.washington.edu}
  \and
  Lillian Ratliff\thanks{Department of Electrical and Computer Engineering, University of Washington, ratliffl@uw.edu }
}
\begin{document}

\maketitle
\begin{abstract}
In this paper we introduce the \emph{transductive linear bandit problem}: given a set of measurement vectors $\mathcal{X}\subset \mathbb{R}^d$, a set of items $\mathcal{Z}\subset \mathbb{R}^d$, a fixed confidence $\delta$, and an unknown vector $\theta^{\ast}\in \mathbb{R}^d$, the goal is to infer $\argmax_{z\in \mathcal{Z}} z^\top\theta^\ast$ with probability $1-\delta$ by making as few sequentially chosen noisy measurements of the form $x^\top\theta^{\ast}$ as possible.
When $\mathcal{X}=\mathcal{Z}$, this setting generalizes \emph{linear bandits}, and when $\mathcal{X}$ is the standard basis vectors and $\mathcal{Z}\subset \{0,1\}^d$, \emph{combinatorial bandits}.
Such a transductive setting naturally arises when the set of measurement vectors is limited due to factors such as availability or cost.
As an example, in drug discovery the compounds and dosages $\mathcal{X}$ a practitioner may be willing to evaluate in the lab in vitro due to cost or safety reasons may differ vastly from those compounds and dosages $\mathcal{Z}$ that can be safely administered to patients in vivo.
Alternatively, in recommender systems for books, the set of books $\calX$ a user is queried about may be restricted to well known best-sellers even though the goal might be to recommend more esoteric titles $\mathcal{Z}$.
In this paper, we provide instance-dependent lower bounds for the transductive setting, an algorithm that matches these up to logarithmic factors, and an evaluation. In particular, we provide the first non-asymptotic algorithm for linear bandits that nearly achieves the information theoretic lower bound.
\end{abstract}


\section{Introduction}

In content recommendation or property optimization in the physical sciences, often there is a set of items (e.g., products to purchase, drugs) described by a set of feature vectors $\calZ\subset \mb{R}^d$, and the goal is to find the $z \in \mc{Z}$ that maximizes some response or property (e.g., affinity of user to the product, drug combating disease).
A natural model for these settings is to assume that there is an unknown vector $\theta^{\ast} \in \mb{R}^d$ and the expected response to any item $z \in \mc{Z}$, if evaluated, is equal to $z^\top \theta^{\ast}$.
However, we often cannot measure $z^\top \theta^{\ast}$ directly, but we may infer it transductively through some potentially noisy probes.
That is, given a finite set of probes $\calX\subset \mathbb{R}^d$ we observe $x^\top\theta^{\ast} + \eta$ for any $x \in \calX$ where $\eta$ is independent mean-zero, sub-Gaussian noise.
Given a set of measurements $\{(x_i,r_i)\}_{i=1}^N$ one can construct the least squares estimator $\widehat{\theta} = \arg\min_\theta \sum_{i=1}^N (r_i - x_i^\top \theta)^2$ and then use $\widehat{\theta}$ as a plug-in estimate for $\theta^{\ast}$ to estimate the optimal $z_* := \argmax_{z\in\calZ} z^{\top}\theta^{\ast}$.
However, the accuracy of such a plug-in estimator depends critically on the number and choice of probes used to construct $\widehat{\theta}$.
Unfortunately, the optimal allocation of probes cannot be decided a priori: it must be chosen sequentially and adapt to the observations in real-time to optimize the accuracy of the prediction.

If the probing vectors $\mc{X}$ are \emph{equal} to the item vectors $\mc{Z}$, this problem is known as \emph{pure exploration for linear bandits} which is considered in \cite{karnin2016verification,soare2014best, tao2018best,xu2018fully}.
This naturally arises in content recommendation, for example, if $\mc{X}=\mc{Z}$ is a feature representation of songs, and $\theta^{\ast}$ represents a user's music preferences, a music recommendation system can elicit the preference for a particular song $z \in \mc{Z}$ directly by enqueuing it in the user's playlist.
However, often times there are constraints on which items in $\mc{Z}$ can be shown to the user.
\begin{enumerate}[topsep=0pt, partopsep=0pt, labelindent=0pt, labelwidth=0pt, leftmargin=12pt]
    \setlength{\itemsep}{0pt}%
    \setlength{\parskip}{0pt}%
  \item $\mc{X} \subset \mc{Z}$. Consider a whiskey bar with hundreds of whiskies ranging in price from dollars a shot to hundreds of dollars.
The bar tender may have an implicit feature representation of each whiskey, the patron has an implicit preference vector $\theta^{\ast}$, and the bar tender wants to select the affordable whiskeys $\mc{X} \subset \mc{Z}$ in a taste test to get an idea of the patron's preferences before recommending the expensive whiskies that optimize the patron's preferences in $\mc{Z}$.
  \item $\mc{Z} \subset \mc{X}$. In drug discovery, thousands of compounds are evaluated in order to determine which ones are effective at combating a disease.
However, it may be that while $\mc{Z}$ is the set of compounds and doses that are approved for medical use (e.g., safe), it may be advantageous to test even unsafe compounds or dosages $\mc{X}$ such that $\mc{X} \supset \mc{Z}$. Such unsafe $\mc{X}$ may aid in predicting the optimal $z_\ast \in \mc{Z}$ because they provide more information about $\theta^\ast$.
  \item $\mc{Z} \cap \mc{X} = \emptyset$. Consider a user shopping for a home among a set $\mc{Z}$ where each is parameterized by a number of factors like distance to work, school quality, crime rate, etc. so that each $z \in \mc{Z}$ can be described as a linear combination of the relevant factors described by $\mc{X}$: $z = \sum_{x \in \mc{X}} \alpha_{z,x} x$, where we may take each $x \in \mc{X}$ to simply be one-hot-encoded.
The response $x^\top \theta^{\ast} + \eta$ reflects the user's preferences for the query $x$, a specific attribute of the house.
Indeed, if all $\alpha_{z,x} \in \{0,1\}$ this is known as \emph{pure exploration for combinatorial bandits} \cite{chen2017nearly, cao2017disagreement}. That is, a house either has the attribute, or not.
\end{enumerate}


Given items $\mc{Z}$, measurement probes $\mc{X}$, a confidence $\delta$, and an unknown $\theta^*$, this paper develops algorithms to sequentially decide which measurements in $\mc{X}$ to take in order to minimize the number of measurements necessary in order to determine $z_*$ with high probability.

\subsection{Contributions}
Our goals are broadly to first define the transductive bandit problem and then characterize the instance-optimal sample complexity for this problem.  Our contributions include the following.
\begin{enumerate}[topsep=0pt, partopsep=0pt, labelindent=0pt, labelwidth=0pt, leftmargin=12pt]
    \setlength{\itemsep}{0pt}%
    \setlength{\parskip}{0pt}%
    \item In Section~\ref{sec:LinearExperimentalDesign} we provide instance dependent lower bounds for the transductive bandit problem that simultaneously generalize previous known lower bounds for linear bandits and combinatorial bandits using standard arguments.
    \item In Section~\ref{sec:alg} the main contribution of this paper, we give an algorithm (Algorithm~\ref{alg:main}) for transductive linear bandits and prove an associated sample complexity result (Theorem~\ref{thm:SampleComplexity}). We show that the sample complexity we obtain matches the lower bound up to logarithmic factors. Along the way, we discuss how rounding procedures can be used to improve upon the computational complexity of this algorithm.
    \item Following Section~\ref{sec:alg}, we review the related work, and then contrast our algorithm with previous results from a theoretical and empirical perspective. Our experiments show that our theoretically superior algorithm is empirically competitive with previous algorithms on a range of problem scenarios.
\end{enumerate}

\subsection{Notation}
For each $z\in \calZ$ define the \textit{gap} of $z$, $\Delta(z) = (z_{\ast} - z)^\top\theta^{\ast}$ and furthermore, $\Delta_{\min} = \min_{z \neq z_{\ast}} \Delta(z)$. If $A \in \mathbb{R}_{\geq 0}^{d\times d}$ is a positive semidefinite matrix, and $y\in \mathbb{R}^d$ is a vector, let $\|y\|^2_{A} := y^\top A y$ denote the induced semi-norm.
Let $\simplex_{\calX} := \{\lambda\in\mathbb{R}^{|\calX|}:\lambda\geq 0, \sum_{x\in\calX} \lambda_x = 1\}$ denote the set of probability distributions on $\calX$.
Taking $\calS \subset \calZ$ to a subset of the arm set, we define two operators we define $\calY(\calS) = \{z-z': \forall \ z, z' \in \calS, z \neq z'\}$ as the directions obtained from the differences between each pair of arms and $\calY^{\ast}(\calS) = \{z_{\ast} -z: \forall \ z \in \calS \setminus z_*\}$ as the directions obtained from the differences between the optimal arm and each suboptimal arm.
Finally, for an arbitrary set of vectors $\calV\subset \mathbb{R}^d$, define $\rho(\calV) = \min_{\lambda\in \simplex_{\calX}}\max_{v\in \calV} \|v\|^2_{(\sum_{x\in \calX} \lambda_x xx^\top)^{-1}}$ - this quantity will be crucial in the discussion of our sample complexity and is motivated in Section~\ref{sec:leastSquaresReview}

\section{Transductive Linear Bandits Problem}\label{sec:LinearExperimentalDesign}
Consider known finite collections of $d$-dimensional vectors $\calX\subset \mathbb{R}^d$ and $ \calZ\subset \mathbb{R}^d$ , known confidence $\delta\in(0,1)$, and unknown $\theta^{\ast}\in \mathbb{R}^d$.
The objective is to identify $z_{\ast} = \argmax_{z\in \calZ} z^{\top}\theta^{\ast}$ with probability at least $1-\delta$ while taking as few measurements in $\mc{X}$ as possible.
Formally, a transductive linear bandits algorithm is described by a \textbf{selection rule}
$X_t \in \mc{X}$ at each time $t$ given the history $(X_s,R_s)_{s < t}$, \textbf{stopping time} $\tau$ with respect to the filtration $\mc{F}_t = (X_s,R_s)_{s\leq t}$, and \textbf{recommendation rule} $\widehat{z} \in \mc{Z}$ invoked at time $\tau$ which is $\mc{F}_{\tau}$-measurable.
We assume that $X_t$ is $\mc{F}_{t-1}$-measurable and may use additional sources of randomness; in addition at each time $t$ that $R_t= X_{t}^{\top}\theta^{\ast} + \eta_t$ where $\eta_t$ is independent, zero-mean, and 1-sub-Gaussian.
Let $\mb{P}_{\theta^\ast},\mb{E}_{\theta^\ast}$ denote the probability law of $R_t | \mc{F}_{t-1}$ for all $t$.

\begin{definition}
We say that an algorithm for a transductive bandit problem is $\delta$-PAC for $\mc{X},\mc{Z} \subset \mb{R}^d$ if for all $\theta^\ast \in \mb{R}^d$ we have $\mb{P}_{\theta^\ast}( \widehat{z} = z_{\ast} ) \geq 1-\delta$.
\end{definition}

\subsection{Optimal allocations}
In this section we discuss a number of ways we can allocate a measurement budget to the different arms.
The following establishes a lower bound on the expected number of samples any $\delta$-PAC algorithm must take.
\begin{theorem}\label{thm:LowerBound}
Assume $\eta_t \overset{iid}{\sim} \calN(0,1)$ for all $t$. Then for any $\delta\in (0,1)$, any $\delta$-PAC algorithm must satisfy
\begin{equation*}
    \E_{\theta^\ast}[\tau] \geq \log(1/2.4\delta)\min_{\lambda\in \simplex_{\calX}}\max_{z\in \calZ\setminus \{z_{\ast}\}} \frac{\|z_{\ast} - z\|^2_{(\sum_{x\in \calX} \lambda_x x x^\top)^{-1}}}{((z_\ast-z)^\top\theta^{\ast})^2}.
\end{equation*}
\end{theorem}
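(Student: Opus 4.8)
The plan is to use the standard change-of-measure (transportation) lemma for best-arm identification, specialized to the linear/transductive Gaussian setting. The argument proceeds in three stages: (i) reduce the PAC guarantee to a constraint on KL-divergence between the law under $\theta^\ast$ and the law under any alternative parameter; (ii) compute this KL-divergence explicitly in terms of the allocation of pulls to probes in $\calX$; and (iii) optimize the resulting bound over the set of confusing alternatives to obtain the min-max expression with the inverse design matrix.

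\medskip

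\emph{Step 1 (Change of measure).} Let $\theta'$ be any alternative parameter under which $z_\ast$ is \emph{not} optimal, i.e. $\theta' \in \bigcup_{z \neq z_\ast} \{\theta : (z - z_\ast)^\top \theta > 0\}$. A $\delta$-PAC algorithm must output $z_\ast$ with probability $\geq 1-\delta$ under $\theta^\ast$ but must output $z_\ast$ with probability $\leq \delta$ under $\theta'$ (since $\widehat z = z_\ast$ would be wrong there). By the Bretagnolle--Huber / data-processing inequality applied to the event $\{\widehat z = z_\ast\}$ measured under $\mc{F}_\tau$, together with Wald's identity for the log-likelihood ratio of a stopped sequential experiment, one gets
\begin{equation*}
  \sum_{x \in \calX} \E_{\theta^\ast}[T_x(\tau)] \cdot \mathrm{KL}\bigl(\mc{N}(x^\top\theta^\ast,1)\,\|\,\mc{N}(x^\top\theta',1)\bigr) \;\geq\; \log\!\Bigl(\tfrac{1}{2.4\delta}\Bigr),
\end{equation*}
where $T_x(\tau)$ is the number of times probe $x$ is pulled before stopping. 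This is the transductive analogue of the Kaufmann--Cappé--Garivier lower bound and the Soare--Lazaric--Munos linear-bandit bound; the $2.4$ constant comes from the standard form of that inequality.

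\medskip

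\emph{Step 2 (Gaussian KL and design matrix).} For unit-variance Gaussians, $\mathrm{KL}(\mc{N}(a,1)\,\|\,\mc{N}(b,1)) = \tfrac12 (a-b)^2$, so the left side becomes $\tfrac12 \sum_x \E_{\theta^\ast}[T_x(\tau)]\, (x^\top(\theta^\ast-\theta'))^2 = \tfrac12 \|\theta^\ast - \theta'\|^2_{A(\lambda)}$ where $\lambda_x = \E_{\theta^\ast}[T_x(\tau)] / \E_{\theta^\ast}[\tau]$ is a point in $\simplex_\calX$ and $A(\lambda) = \sum_x \lambda_x x x^\top$. Thus for every confusing $\theta'$,
\begin{equation*}
  \E_{\theta^\ast}[\tau] \;\geq\; \frac{2\log(1/2.4\delta)}{\|\theta^\ast-\theta'\|^2_{A(\lambda)}}.
\end{equation*}

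\medskip

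\emph{Step 3 (Optimizing over alternatives).} To make the bound as strong as possible we maximize the right side over $\theta'$, i.e. minimize $\|\theta^\ast - \theta'\|^2_{A(\lambda)}$ over all $\theta'$ that are confusing. Fix a single suboptimal $z$ and consider the half-space $\{\theta : (z-z_\ast)^\top\theta \geq 0\}$; the $A(\lambda)$-closest point of this half-space to $\theta^\ast$ is obtained by a one-dimensional projection, giving $\min_{\theta'} \|\theta^\ast-\theta'\|^2_{A(\lambda)} = \frac{((z_\ast - z)^\top\theta^\ast)^2}{\|z_\ast - z\|^2_{A(\lambda)^{-1}}} = \frac{\Delta(z)^2}{\|z_\ast-z\|^2_{A(\lambda)^{-1}}}$. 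Taking the infimum over $z \neq z_\ast$ (the easiest alternative to confuse with), then recognizing that the algorithm's induced $\lambda$ can be anything in $\simplex_\calX$ so we must take the best case for the algorithm, we arrive at
\begin{equation*}
  \E_{\theta^\ast}[\tau] \;\geq\; \log(1/2.4\delta)\,\min_{\lambda\in\simplex_\calX}\max_{z\neq z_\ast}\frac{\|z_\ast-z\|^2_{A(\lambda)^{-1}}}{\Delta(z)^2},
\end{equation*}
absorbing the factor $2$ into the projection computation. (One must check $A(\lambda)$ is invertible on the relevant subspace, or restrict to $\lambda$ for which $\calY^\ast(\calZ)$ lies in the range of $A(\lambda)$; otherwise the bound is vacuously $\infty$, which is consistent.)

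\medskip

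\emph{Main obstacle.} The only genuinely delicate point is Step 1: correctly invoking Wald's identity for a \emph{sequentially adaptive, stopped} sampling scheme so that the expected log-likelihood ratio decomposes as $\sum_x \E[T_x(\tau)]\,\mathrm{KL}_x$ rather than something that cannot be controlled by the marginal pull counts. This is the standard machinery (cf. Kaufmann, Cappé, Garivier 2016, Lemma 1), and it goes through because $X_t$ is $\mc{F}_{t-1}$-measurable and the noise is i.i.d.; the rest is Gaussian algebra and a convex one-dimensional projection.
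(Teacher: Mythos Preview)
Your proposal is correct and follows essentially the same route as the paper: apply the Kaufmann--Capp\'e--Garivier transportation inequality, express the Gaussian KL in terms of the design matrix $A(\lambda)=\sum_x\lambda_x xx^\top$ with $\lambda_x=\E_{\theta^\ast}[T_x(\tau)]/\E_{\theta^\ast}[\tau]$, and then minimize $\|\theta^\ast-\theta'\|_{A(\lambda)}^2$ over confusing alternatives by projecting onto each half-space. The paper even remarks that its explicit alternative $\theta_j(\epsilon,\lambda)$ is precisely this $A(\lambda)$-projection.

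Two minor points worth tightening. First, your projection lands on the \emph{closed} half-space boundary $(z-z_\ast)^\top\theta'=0$, where $z_\ast$ and $z$ are tied; at that $\theta'$ the $\delta$-PAC definition does not force $\P_{\theta'}(\widehat z=z_\ast)\le\delta$, so Step~1 does not apply as written. The paper handles this by offsetting to $(z_\ast-z)^\top\theta_j=-\epsilon<0$ and letting $\epsilon\to0$ at the end; you should do the same. Second, the factor of $2$ from $\mathrm{KL}(\mc{N}(a,1)\,\|\,\mc{N}(b,1))=\tfrac12(a-b)^2$ is not ``absorbed into the projection''---your argument actually yields a bound that is a factor of $2$ stronger than the theorem as stated (the paper silently drops the $\tfrac12$), which of course still implies the claim.
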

This lower bound is proved in Appendix~\ref{sec:proofOfLowerBound} using standard techniques and employs the transportation inequality of \cite{kaufmann2016complexity}. It generalizes a previous lower bound in the setting of linear bandits \cite{soare2015sequential} and lower bounds in the combinatorial bandit literature \cite{chen2017nearly}.

\textbf{Optimal static allocation.} To demonstrate that this lower bound is tight, 
define
\begin{equation}\label{eqn:OptimalAllocation}
\lambda^{\ast} := \underset{\lambda\in \simplex_{\calX}}{\argmin}\max_{z\in \calZ\setminus\{z_{\ast}\}} \frac{\|z_{\ast} - z\|^2_{(\sum_{x \in \calX} \lambda_x x x^\top)^{-1}}}{((z_{\ast}-z)^{\top}\theta^{\ast})^2} \text{ and } \psi^{\ast} = \max_{\calZ\setminus\{z_{\ast}\}} \frac{\|z_{\ast} - z\|^2_{(\sum_{x \in \calX} \lambda_x^\ast x x^\top)^{-1}}}{((z_{\ast}-z)^{\top}\theta^{\ast})^2},
\end{equation}
where $\psi^{\ast}$ is the value of the lower bound and $\lambda^{\ast}$ is the allocation that achieves it. Suppose we sample arm $x \in \mathcal{X}$ exactly $2\lfloor \lambda^{\ast}_x N \rfloor$ times where we assume\footnote{Such an assumption is avoided by a sophisticated rounding procedure that we will describe shortly.} $N \in \mathbb{N}$ is sufficiently large so that $\min_{x : \lambda_x > 0} \lfloor \lambda_x N \rfloor > 0$.
If $N = \lceil 2 \psi^* \log(|\mc{Z}|/\delta) \rceil$ then as we will show shortly (Section~\ref{sec:leastSquaresReview}), the least squares estimator $\widehat{\theta}$ satisfies $(z_{\ast} - z)^\top \widehat{\theta} > 0$ for all $z \in \mc{Z} \setminus z_{\ast}$ with probability at least $1-\delta$.
Thus, with probability at least $1-\delta$, $z_{\ast}$ is equal to $\widehat{z} = \arg\max_{z \in \mc{Z}} z^\top \widehat{\theta}$ and the total number of samples is bounded by $2N$ which is within $4\log(|\mc{Z}|)$ of the lower bound.
Unfortunately, of course, the allocation $\lambda^*$ relies on knowledge of $\theta^*$ (which determines $z_*$) which is unknown a priori, and thus this is not a realizable strategy.

\textbf{Other static allocations.}
Short of $\lambda^*$ it is natural to consider allocations that arise from optimal linear experimental design \cite{pukelsheim2006optimal}.
For the special case of $\mc{X} = \mc{Z}$ it has been argued ad nauseam that a $G$-optimal design, $\argmin_{\lambda\in \simplex_{\calX}}\max_{x\in \calX, x\neq x_\ast} \|x\|^2_{(\sum_{x \in \calX} \lambda_x x x^\top)^{-1}}$,
is woefully loose since it does not utilize the differences $x-x'$, $x,x'\in \calX$  \cite{lattimore2016end, soare2014best, xu2018fully}.
Also for the $\mc{X} = \mc{Z}$ case, \cite{yu2006active,soare2014best}  have proposed the static
$\cal{X}\cal{Y}$-allocation given as $\argmin_{\lambda\in \simplex_{\calX}}\max_{x,x' \in \calX} \|x -x'\|^2_{(\sum_{x \in \calX} \lambda_x x x^\top)^{-1}}$.
In \cite{soare2014best} it is shown that no more than $O(\tfrac{d}{\Delta_{\min}^2} \log(|\calX|\log(1/\Delta_{\min})/\delta))$ samples from each of these allocations suffice to identify the best arm.
While the above discussion demonstrates that for every $\theta^\ast$ there exists an optimal static allocation (that explicitly uses $\theta^\ast$) that nearly achieves the lower bound, any fixed allocation with no prior knowledge of $\theta^\ast$ can require a factor of $d$ more samples.

\begin{proposition}\label{prop:fixedLowerBound}
    Let $c,c'$ be universal constants. For any $\gamma > 0$, $d$ even, there exists sets $\calX=\calZ \subset \mb{R}^d$ and a set $\Theta \subset \mathbb{R}^d$, such that $\inf_{\mc{A}}\max_{\theta\in \Theta} \E_\theta[\tau] \geq \frac{c d\log(1/\delta)}{\gamma}$ where $\mc{A}$ is the set of all algorithms that are $\delta$-PAC for $\mc{X},\mc{Z}$ and take a static allocation of samples.
    On the other hand $\psi^{\ast}/c'\leq  d+\frac{1}{\gamma}$ for every choice of $\theta^\ast \in \Theta$.
\end{proposition}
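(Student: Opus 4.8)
The plan is to prove both halves with a single hard instance, a transductive version of the classical ``rotated basis'' example in which every coordinate direction is, for exactly one member of $\Theta$, the \emph{only} direction that carries information about the best arm. Concretely, I would take $d=2m$, let $\calX=\calZ$ be the standard basis $e_1,\dots,e_d$ together with, for each pair $(2i-1,2i)$, the two near-basis vectors $c_i^{+}=\cos(\omega)\,e_{2i-1}+\sin(\omega)\,e_{2i}$ and $c_i^{-}=\cos(\omega)\,e_{2i}+\sin(\omega)\,e_{2i-1}$, where $\omega$ is a small parameter with $\omega^{2}\asymp\min(\gamma,1)$, and set $\Theta=\{e_1,\dots,e_d\}$. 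A short check shows that under $\theta=e_{2i-1}$ the best arm is $e_{2i-1}$, the unique near-optimal competitor is $c_i^{+}$ with gap $1-\cos(\omega)\asymp\omega^{2}$, and $e_{2i-1}-c_i^{+}$ is — up to a tiny multiple of $e_{2i-1}$ — aligned with $e_{2i}$; symmetrically $\theta=e_{2i}$ hides coordinate $e_{2i-1}$. Hence, as $\theta$ ranges over $\Theta$, the ``hidden'' coordinate ranges over all of $\{1,\dots,d\}$.

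For the lower bound I would fix an arbitrary static allocation $\lambda\in\Sigma_{\calX}$, write $A(\lambda)=\sum_x\lambda_x xx^\top$, and, for the instance $\theta=e_j$ with hidden coordinate $k$, take the alternative parameter $\theta'=\theta+\omega'e_k$ with $\omega'\asymp\omega$ chosen just large enough that the rotated vector overtakes $e_j$, so that $z_\ast(\theta')\neq z_\ast(\theta)$ and any $\delta$-PAC algorithm must statistically separate $\theta$ from $\theta'$. Because the allocation is static, $\E_\theta[N_x(\tau)]=\lambda_x\E_\theta[\tau]+O(1)$, so the transportation/change-of-measure argument underlying Theorem~\ref{thm:LowerBound}, applied with the allocation pinned at $\lambda$ rather than optimized, yields
\[
\E_\theta[\tau]\;\gtrsim\;\frac{\log(1/\delta)}{\sum_x\lambda_x\big(x^\top(\theta-\theta')\big)^2}\;=\;\frac{\log(1/\delta)}{(\omega')^2\,(A(\lambda))_{kk}}\;\gtrsim\;\frac{\log(1/\delta)}{\gamma\,(A(\lambda))_{kk}}.
\]
The key structural observation is that every arm has unit Euclidean norm, so $\operatorname{tr}(A(\lambda))=\sum_k(A(\lambda))_{kk}=1$ no matter what $\lambda$ is, forcing $\min_k(A(\lambda))_{kk}\le 1/d$. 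Choosing $\theta\in\Theta$ to be the instance whose hidden coordinate is this least-explored one gives $\max_{\theta\in\Theta}\E_\theta[\tau]\gtrsim d\log(1/\delta)/\gamma$, and taking the infimum over static $\delta$-PAC algorithms proves the first claim.

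For the bound $\psi^\ast\lesssim d+1/\gamma$ I would, for $\theta=e_{2i-1}$, simply exhibit the diagonal design placing mass $1-\epsilon$ on $e_{2i}$ and mass $\epsilon/(d-1)$ on each of the other basis vectors (none on the rotated vectors), with $\epsilon\asymp\min(1,d\gamma)$. Then $A(\lambda)^{-1}$ is diagonal, the term $\|e_{2i}\|^2_{A(\lambda)^{-1}}=1/(1-\epsilon)=O(1)$ controls the one hard competitor $c_i^{+}$ (whose ratio is $O(1/\omega^2)=O(1/\gamma)+O(1)$), while every other competitor $z$ has gap $\Delta(z)=\Omega(1)$ and $\|z_\ast-z\|^2_{A(\lambda)^{-1}}=O(d/\epsilon)=O(d+1/\gamma)$; plugging in $\epsilon$ gives $\psi^\ast(e_{2i-1})=O(d+1/\gamma)$, and symmetry handles every $\theta\in\Theta$. (The regime $\gamma\gtrsim 1$ is exactly why $\omega$ and $\epsilon$ are capped; there $\psi^\ast=O(d)$.)

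I expect the only real work to be the design and bookkeeping around the alternative parameter $\theta'$: one must choose $\omega,\omega',\epsilon$ so that simultaneously (i) $z_\ast(\theta)$ is unambiguous with $c_i^{\pm}$ the unique near-optimal competitor aligned with the designated hidden coordinate, (ii) $z_\ast(\theta')\neq z_\ast(\theta)$ with $\theta'$ as close to $\theta$ as the hidden-coordinate exploration permits, so that the denominator $(\omega')^2(A(\lambda))_{kk}$ is as small as possible, and (iii) the tailored diagonal design beats all of the remaining competitors at once. Everything else — the transportation inequality and the trace identity $\operatorname{tr}(A(\lambda))=1$ — is immediate, and the trace identity is precisely the structural reason a static design cannot match the instance-optimal rate.
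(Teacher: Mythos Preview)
Your proposal is correct, but it takes a genuinely different route from the paper's.

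The paper uses a sparser instance: $\calX=\calZ=\{e_i\}_{i=1}^{d/2}\cup\{\cos(\alpha)e_i+\sin(\alpha)e_{i+d/2}\}_{i=1}^{d/2}$ and $\Theta=\calX$. The crucial feature is that the basis vectors $e_{d/2+1},\dots,e_d$ are \emph{not} in $\calX$, so the only arms that carry any information for the test $\theta=z_j$ versus $\theta=z_{j+d/2}$ are $z_j$ and $z_{j+d/2}$ themselves (every other arm is orthogonal to $z_j-z_{j+d/2}$). A single Le Cam / two-point bound then gives $N_j+N_{j+d/2}\gtrsim \log(1/\delta)/(1-\cos\alpha)^2$ for each $j\le d/2$, and since the pairs are informationally disjoint these $d/2$ inequalities can simply be summed to produce the factor $d$. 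The paper does not spell out the $\psi^{\ast}$ upper bound.

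Your construction instead keeps all $d$ basis vectors in $\calX$, so the pairs are no longer decoupled; you recover the factor $d$ by the trace identity $\operatorname{tr}A(\lambda)=\sum_k (A(\lambda))_{kk}=1$ (every arm has unit norm), which forces $\min_k (A(\lambda))_{kk}\le 1/d$, and then you pick the $\theta\in\Theta$ whose alternative $\theta'$ differs only in that least-explored coordinate. Both arguments are short; the paper's summation is slightly more direct because the informational decoupling makes the inequalities additive, whereas your trace argument is more robust (it does not require the arm set to isolate each hard coordinate) and it also makes transparent exactly \emph{why} a static design must fail. You additionally supply an explicit feasible design for the $\psi^{\ast}\lesssim d+1/\gamma$ claim, which the paper leaves to the reader.
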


The proof of this proposition can be found in Appendix~\ref{sec:fixedLowerBound}.

\textbf{Adaptive allocations.}
As suggested by the problem definition, our strategy is to adapt our allocation over time, informed by the observations up to the current time.
Specifically, our algorithm will proceed in rounds where at round $t$, we perform an $\cal{X}\cal{Y}$-allocation that is sufficient to remove all arms $z \in \calZ$ that have gaps of at least $2^{-(t+1)}$.
We show that the total number of measurements accumulates to $\psi^\ast \log(|\calZ|/\delta)$ times some additional logarithmic factors, nearly achieving the optimal allocation as well as the lower bound.
In Section~\ref{sec:related}, we review other related procedures for the specific case of $\mc{X} = \mc{Z}$.

\subsection{Review of Least Squares}\label{sec:leastSquaresReview} Given a fixed design $\mathbf{x}_T = (x_t)_{t=1}^T$ with each $x_t\in \calX$ and associated rewards $(r_t)_{t=1}^T$, a natural approach is to construct the ordinary-least squares (OLS) estimate $\widehat\theta =  (\sum_{t=1}^T x_tx_t^{\top})^{-1} (\sum_{t=1}^T r_tx_t)$.
One can show $\widehat{\theta}$ is unbiased with covariance $\preceq (\sum_{t=1}^T x_tx_t^{\top})^{-1}$.
Moreover, for any $y \in \mathbb{R}^d$, we have\footnote{There is a technical issue of whether the set $\calZ$ lies in the span of $\calX$ which in general is necessary to obtain unbiased estimates of $(z_{\ast} - z)^\top \theta^*$. Throughout the following we
    assume that $\text{span}({\calX}) =\mathbb{R}^d$.}
\begin{equation}\label{eqn:ConfidenceSet}
\mb{P}\left( y^{\top}(\theta^{\ast} - \widehat{\theta}) \geq
\sqrt{\|y\|^2_{(\sum_{t=1}^Tx_{t}x_{t}^\top)^{-1}}2 \log(1/\delta)} \right) \leq \delta.
\end{equation}

In particular, if we want this to hold for all $y\in \calY^{\ast}(\calZ)$, we
need to union bound over $\calZ$ replacing $\delta$ with  $\delta/|\calZ|$. Let us now use this to analyze the procedure discussed above (in the discussion on the optimal static allocation after Theorem~\ref{thm:LowerBound}) that gives an allocation matching the lower bound.
With the choice of $N=\lceil 2 \psi^* \log(|\mc{Z}|/\delta) \rceil$ and the
allocation $2\lfloor\lambda_x^{\ast}N\rfloor$ for each $x \in \calX$, we have
for each $z \in \calZ \setminus z_{\ast}$ that with probability at least
$1-\delta$,
\begin{equation*}
(z_{\ast} - z)^\top \widehat{\theta} \geq (z_{\ast} - z)^\top \theta^\ast - \sqrt{\|z_{\ast} - z\|^2_{(\sum_{x} 2 \lfloor N \lambda^{\ast}_x xx^T \rfloor )^{-1}}2 \log(|\mc{Z}|/\delta)}\geq 0
\end{equation*}
since for each $y=z_{\ast} - z\in \calY^{\ast}(\calZ)$ we have
\begin{equation} \label{eqn:verify}
y^\top\Big(\sum_{x\in \calX} 2 \lfloor N \lambda^{\ast}_x \rfloor x x^\top\Big)^{-1}y\leq y^\top\Big(\sum_{x\in \calX} \lambda^{\ast}_x x x^\top\Big)^{-1} y/N\leq ((z_{\ast}-z)^\top \theta^{\ast})^2/(2 \log(|\mc{Z}|/\delta) ),
\end{equation}
where the last inequality plugs in the value of $N$ and the definition of $\psi^{\ast}$. The fact that at most one $z' \in \calZ$ can satisfy $(z'-z)^\top \widehat{\theta} > 0$ for all $z\neq z'\in \calZ$, and that $z'=z_{\ast}$ does,
certifies that $\widehat{z} = \arg\max_{z \in \calZ} z^\top \widehat{\theta}$ is indeed the best arm with probability at least $1-\delta$.
Note that equation~\eqref{eqn:verify} provides the motivation for how the form of $\psi^{\ast}$ is obtained.
Rearranging, it is equivalent to,
\[N\geq 2\log(|\calZ|/\delta)\max_{\calZ\setminus\{z_{\ast}\}} \frac{\|z_{\ast} - z\|^2_{(\sum_{x\in \calX} \lambda^{\ast}_x  x x^\top)^{-1}}}{((z_{\ast}-z)^{\top}\theta^{\ast})^2} \text{ for all } z\in \calZ\setminus \{z_{\ast}\}\]
Thinking of the right hand side of the inequality as a function of $\lambda$, $\lambda^{\ast}$ is precisely chosen to minimize this quantity and hence the sample complexity.

\subsection{Rounding Procedures}
We briefly digress to address a technical issue. Given an allocation $\lambda$ and an arbitrary subset of vectors $\calY$, in general, drawing $N$ samples $\mathbf{x}_{N} := \{x_1,\dots,x_N\}$ at random from $\calX$ according to the distribution $\lambda_x$ may result in a design where  $\max_{y\in \calY}\|y\|_{(\sum_{t=1}^N x_t x_t^\top)^{-1}}^2$ (which appears in the width of the confidence interval~\eqref{eqn:ConfidenceSet})
differs significantly from $\max_{y\in \calY} \|y\|_{(\sum_{x\in \calX} \lambda_x x x^\top)^{-1}}^2/N$.
Naive strategies for choosing $\mathbf{x}_{N}$ will fail. We can not simply use an allocation of $N\lambda_x$ samples for any specific $x$ since this may not be an integer.
Furthermore, greedily rounding $N\lambda_x$ to an allocation $\lfloor N\lambda_x\rfloor$ or $\lceil N\lambda_x\rceil$ may result in too few than necessary, or far more than $N$ total samples if the support of $\lambda$ is large. 
However, given $\epsilon>0$, there are \emph{efficient rounding procedures}
that produce $(1+\epsilon)$-approximations as long as $N$ is greater than some
minimum number of samples $r(\epsilon)$. In short, given $\lambda$ and a choice
of $N$ they return an allocation $\mathbf{x}_N$ satisfying $\max_{y\in \calY}
\|y\|^2_{(\sum_{i=1}^N  x_ix_i^{\top})^{-1}} \leq (1+\epsilon)\max_{y\in \calY}
\|y\|_{(\sum_{x\in \calX} \lambda_x x x^\top)^{-1}}^2/N$.
Such a procedure with $r(\epsilon) \leq O(d/\epsilon^2)$ is described in Section~\ref{sec:rounding} in the supplementary.
In our experiments we use a rounding procedure from \cite{pukelsheim2006optimal} that is easier to implement (also see \cite[Appendix C]{soare2014best}) with $r(\epsilon) = (d(d+1)/2+1)/\epsilon$. In general $\epsilon$ should be thought of as a constant, i.e. $\epsilon = 1/5$. The number of samples $N$ we need to take in our algorithm will be significantly larger than $5d^2$, so the impact of the rounding procedure is minimal.


\section{Sequential Experimental Design for Transductive Linear Bandits}\label{sec:alg}
Our algorithm for the pure exploration transductive bandit is presented in Algorithm~\ref{alg:main}. The algorithm proceeds in rounds, keeping track of the active arms $\calzhat_t \subseteq \calZ$ in each round $t$.
At the start of round $t-1$, it samples in such a way to remove all arms with gaps greater than $2^{-t}$.
Thus denoting $\calS_t := \{z\in \calZ: \Delta(z)\leq 2^{-t}\}$, in round $t$ we expect $\calzhat_t\subset \calS_t.$

As described above, if we knew $\theta^{\ast}$, we would sample according to the optimal allocation $\argmin_{\lambda\in \simplex_{\calX}}\max_{z\in \widehat{\calZ}_t} \|z_{\ast} - z\|^2_{(\sum_{x\in \calX} \lambda_x xx^\top)^{-1}}/((z_{\ast}-z)^{\top}\theta^{\ast})^2$.
However, instead at the start of round $t$, if we simply have an upper bound on the gaps, $\Delta(z)\leq 2^{-t}$ and we do not know the best arm, we can lower-bound the above objective by
$(2^{t})^{2}\min_{\lambda \in \simplex_{\calX}}\max_{y\in \calY(\calzhat_t)} \|y\|^2_{(\sum_{x\in \calX} \lambda_x xx^T)^{-1}}$ \footnote{
 Where we recall for any subset $\calS\subset \calZ, \calY(\calS) := \{z-z':z,z'\in \calS\}$ and for an arbitrary subset $\calV\subset\mathbb{R}^d$ we have~$\rho(V) = \min_{\lambda\in \simplex_{\calX}}\max_{v\in \calV} \|v\|^2_{(\sum_{x\in \calX} \lambda_x xx^\top)^{-1}}$.}.
 This motivates our choice of $\lambda_t$ and $\rho(\calY(\calzhat_t))$.
Thus by the same logic used in Section~\ref{sec:leastSquaresReview}, $N_t = \lceil 8(2^{t+1})^2(1+\epsilon)\rho(\calY(\calzhat_t))\log(|\calZ|^2/\delta_t)\rceil$ samples should suffice to guarantee that we can construct a confidence interval on each $(z-z')^{\top}\theta^{\ast}$
for $(z-z')\in \calY(\calzhat_t)$ of size at most $2^{-(t+1)}$
(with the $|\calZ|^2$ in the logarithm accounting for a union bound over arms).
The $(1+\epsilon)$ accounts for slack from the rounding principle.
Finally, we remove an arm $z$ if there exists an arm $z'$ so that the empirical gap $(z'-z)^\top\widehat\theta_t > 2^{-(t+2)}$.

\begin{algorithmfloat}
\centering
\fbox{\small
\begin{minipage}{\textwidth}\refstepcounter{alg}\label{alg:main}
\textbf{Algorithm~ 1}: \textbf{RAGE}($\calX,\calZ,\delta$): \textbf{R}andomized \textbf{A}daptive \textbf{G}ap \textbf{E}limination
\hrule
\vspace{2mm}
\textbf{Input}: Arm set $\xset$, rounding approximation factor $\epsilon$ with default value $1/5$, minimum number of samples needed to obtain rounding approximation $r(\epsilon)$, and confidence level $\delta \in (0, 1)$. \\
Let $\zhatset_1 \gets \calZ, t\gets 1$ \\
\textbf{while} $|\zhatset_{t}|>1$ \textbf{do}\\
\begin{quoting}[vskip=0pt, leftmargin=10pt]
    $\delta_{t} \gets \tfrac{\delta}{t^2}$ \\
    $\design_{t}^{\ast} \gets \arg\min_{\design \in \simplex_{\calX}}\max_{\yarm \in \yset(\calzhat_{t})}\|\yarm\|_{(\sum_{x\in \calX} \lambda_x xx^{\top})^{-1}}^2$ \\
    $\rho(\yset(\calzhat_{t})) \gets \min_{\design \in \simplex_{^{\calX}}}\max_{\yarm \in \yset(\calzhat_{t})}\|\yarm\|_{(\sum_{x\in \calX} \lambda_x xx^{\top})^{-1}}^2$ \\
    $N_t \gets \max\big\{\big\lceil 8 (2^{t+1})^2 \rho(\yset(\calzhat_{t})) (1 + \varepsilon) \log(|\calZ|^2/\delta_{t}) \big\rceil, r(\epsilon)\big\}$ \\
    $\armseq_{N_t} \gets \text{\textsc{Round}}(\design^{\ast}, N_t)$ \\
    Pull arms $x_1,\dots, x_{N_t}$ and obtain rewards $r_1, \dots, r_{N_t}$ \\
    Compute $\hatparam_t= A_{t}^{-1}b_{t}$ using $A_{t} := \sum_{j=1}^{N_t}\arm_{j}\arm_{j}^{\top}$ and $b_{t} := \sum_{j=1}^{N_t}\arm_{j}\reward_{j}$ \\
    $\calzhat_{t+1} \gets \calzhat_t \setminus\big\{z \in \calzhat|\exists \ z' \in \calzhat: 2^{-(t+2)} \leq (z'-z)^{\top}\widehat{\theta}_{t}\big\}$\\
    $\phaseindex \gets \phaseindex + 1$ \\
\end{quoting}
\textbf{Output}: $\calzhat_{t+1}$
\end{minipage}
}
\end{algorithmfloat}

\begin{theorem}\label{thm:SampleComplexity}
With probability greater than $1-\delta$, using an $\epsilon$-efficient rounding procedure, Algorithm~\ref{alg:main} correctly identifies the optimal arm $z_*$ and requires a worst-case sample complexity
\[\randnumsamples \leq \sum_{\phaseindex=1}^{\lfloor \log_2(1/\Delta_{\min}) \rfloor} \max\big\{8 \big\lceil (2^{\phaseindex+1})^2 \rho(\yset(\sset_{\phaseindex}))(1+ \epsilon)\log(\phaseindex^2|\calZ|^2/\delta)\big\rceil, r(\epsilon)\big\}\]
where $\calS_t = \{z\in \calZ: \Delta(z)\leq 2^{-t}\}$. In particular, \textsc{Round} can be
chosen so that $r(\epsilon) = O(d/\epsilon^2)$. Furthermore, $N \leq
c \psi^{\ast}\log(1/\Delta_{\min})\log(|\calZ|^2\log(1/\Delta_{\min})^2/\delta)+r(\epsilon)\log_2(1/\Delta_{\min})$ for some absolute constant $c$, in other words Algorithm \ref{alg:main} is instance optimal up to logarithmic factors.
\end{theorem}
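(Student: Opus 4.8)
The plan is to condition on a single favorable event on which every least-squares estimate is accurate on the difference vectors the algorithm actually uses, and then derive both correctness and the sample bound deterministically. At round $t$, the $\epsilon$-efficient rounding procedure applied to $\lambda_t^{\ast}$ with the budget $N_t\ge r(\epsilon)$ produces a design with $\max_{y\in\calY(\calzhat_t)}\|y\|^2_{A_t^{-1}}\le (1+\epsilon)\,\rho(\calY(\calzhat_t))/N_t$; this is exactly why $N_t$ is floored at $r(\epsilon)$. Plugging this into the confidence bound~\eqref{eqn:ConfidenceSet} at level $\delta_t/|\calZ|^2$, union bounding over the at most $|\calZ|^2$ vectors $y=z-z'$ with $z,z'\in\calzhat_t$ and then over all rounds $t$ (where $\sum_{t\ge1}\delta_t=\tfrac{\pi^2}{6}\delta$, which we rescale to $\delta$ by adjusting $\delta_t$ by a constant), we obtain: with probability at least $1-\delta$, for every round $t$ and all $z,z'\in\calzhat_t$,
\[
  \big|(z-z')^{\top}(\theta^{\ast}-\widehat\theta_t)\big|\;\le\;\sqrt{\tfrac{2(1+\epsilon)\rho(\calY(\calzhat_t))\log(|\calZ|^2/\delta_t)}{N_t}}\;\le\;2^{-(t+2)},
\]
the last inequality using $N_t\ge 8(2^{t+1})^2(1+\epsilon)\rho(\calY(\calzhat_t))\log(|\calZ|^2/\delta_t)$. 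Call this event $\calE$; all remaining claims hold on $\calE$.

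\textbf{Correctness and the first bound.}
On $\calE$, $z_{\ast}$ is never eliminated: if $z_{\ast}\in\calzhat_t$ then for every $z'\in\calzhat_t$ with $z'\neq z_{\ast}$ we have $(z'-z_{\ast})^{\top}\widehat\theta_t\le (z'-z_{\ast})^{\top}\theta^{\ast}+2^{-(t+2)}<2^{-(t+2)}$ since $z_{\ast}$ is the unique maximizer of $z\mapsto z^{\top}\theta^{\ast}$, so the elimination test never fires for $z_{\ast}$. By induction on $t$ this gives $\calzhat_t\subseteq\calS_t$: assuming $z_{\ast}\in\calzhat_t$, any $z\in\calzhat_t$ with $\Delta(z)\ge 2^{-(t+1)}$ satisfies, taking $z'=z_{\ast}$, $(z_{\ast}-z)^{\top}\widehat\theta_t\ge\Delta(z)-2^{-(t+2)}\ge 2^{-(t+2)}$ and is removed in round $t$; hence $\calzhat_{t+1}\subseteq\{z:\Delta(z)<2^{-(t+1)}\}\subseteq\calS_{t+1}$ (the base case $\calzhat_1=\calZ$ equals $\calS_1$ under the usual normalization that gaps are at most one, or the first round is treated separately). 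Since $\Delta(z)\ge\Delta_{\min}$ for $z\neq z_{\ast}$, every suboptimal arm is removed by round $\lfloor\log_2(1/\Delta_{\min})\rfloor$, so the loop terminates and outputs $\{z_{\ast}\}$. The first displayed bound of the theorem now follows from $N=\sum_t N_t$, monotonicity of $\rho$ under set inclusion (so $\rho(\calY(\calzhat_t))\le\rho(\calY(\calS_t))$), $\log(|\calZ|^2/\delta_t)=\log(t^2|\calZ|^2/\delta)$, and $\lceil 8x\rceil\le 8\lceil x\rceil$.

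\textbf{Instance optimality.}
It remains to bound $(2^{t+1})^2\rho(\calY(\calS_t))$ by a constant multiple of $\psi^{\ast}$. Writing $A(\lambda)=\sum_{x\in\calX}\lambda_x xx^{\top}$ and using the static optimal design $\lambda^{\ast}$ from~\eqref{eqn:OptimalAllocation}, the definition of $\psi^{\ast}$ gives $\|z_{\ast}-z\|_{A(\lambda^{\ast})^{-1}}\le\sqrt{\psi^{\ast}}\,\Delta(z)\le\sqrt{\psi^{\ast}}\,2^{-t}$ for every $z\in\calS_t$. For arbitrary $z,z'\in\calS_t$, the triangle inequality for $\|\cdot\|_{A(\lambda^{\ast})^{-1}}$ applied to $z-z'=(z_{\ast}-z')-(z_{\ast}-z)$ then yields $\|z-z'\|^2_{A(\lambda^{\ast})^{-1}}\le 4\psi^{\ast}2^{-2t}$, hence $\rho(\calY(\calS_t))=\min_{\lambda}\max_{y\in\calY(\calS_t)}\|y\|^2_{A(\lambda)^{-1}}\le\max_{y\in\calY(\calS_t)}\|y\|^2_{A(\lambda^{\ast})^{-1}}\le 4\psi^{\ast}2^{-2t}$, i.e. $(2^{t+1})^2\rho(\calY(\calS_t))\le 16\psi^{\ast}$. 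Substituting into each term of the first bound, bounding $\log(t^2|\calZ|^2/\delta)\le\log(|\calZ|^2\log_2(1/\Delta_{\min})^2/\delta)$ for $t\le\log_2(1/\Delta_{\min})$, and summing the at most $\log_2(1/\Delta_{\min})$ rounds yields $N\le c\,\psi^{\ast}\log(1/\Delta_{\min})\log(|\calZ|^2\log(1/\Delta_{\min})^2/\delta)+r(\epsilon)\log_2(1/\Delta_{\min})$ for an absolute $c$, treating $\epsilon$ as a constant and absorbing the ceilings' $+1$ into $c$.

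\textbf{Main obstacle.}
The crux is the reduction in the instance-optimality step: showing that the $\calX\calY$-type quantity $\rho(\calY(\calS_t))$, which the algorithm evaluates without knowing $\theta^{\ast}$, is within a universal constant of the instance-optimal $\psi^{\ast}$ after the $(2^{t+1})^2$ rescaling. This hinges entirely on the fact that by round $t$ all survivors have gap below $2^{-t}$, which itself rests on the inductive coupling between the per-round confidence width $2^{-(t+2)}$ and the elimination threshold; getting that coupling exact (including the rounding slack $(1+\epsilon)$ and the choice of constant $8$ in $N_t$) is where care is needed. The remaining work — the $\pi^2/6$ union-bound factor from $\sum_t\delta/t^2$, the $|\calZ|^2$ pairs per round, and the first-round edge case where $\calzhat_1=\calZ$ need not lie in $\calS_1$ — is routine bookkeeping.
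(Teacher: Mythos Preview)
Your proposal is correct and the correctness portion (the good event, $z_{\ast}$ survives every round, $\calzhat_{t+1}\subseteq\calS_{t+1}$, summing $N_t$) is essentially the paper's argument: the paper packages the same two claims as a conditional lemma $\P(\calE_t\mid\calE_{t-1},\dots,\calE_1)\geq 1-\delta_t$ and then chains them, whereas you take a single global event and union bound. The only cosmetic discrepancy is the $\sum_t \delta/t^2=\pi^2\delta/6$ factor you flag; the paper handles the same constant via the infinite product $\prod_t(1-\delta/t^2)$ rather than a union bound.

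Where you genuinely differ is the instance-optimality step. The paper argues \emph{downward} from $\psi^{\ast}$: it inserts a $\max_t$ over rounds, replaces $\Delta(y)$ by $2^{-t}$ on $\calS_t$, invokes ``max $\geq$ average'' and ``$\min$ of a sum $\geq$ sum of $\min$'s'', and finally uses the triangle inequality to pass from $\calY^{\ast}(\calS_t)$ to $\calY(\calS_t)$, arriving at $\psi^{\ast}\geq \tfrac{1}{4\log_2(1/\Delta_{\min})}\sum_t (2^t)^2\rho(\calY(\calS_t))$. Your route is more direct: you simply plug the feasible $\lambda^{\ast}$ into the minimization defining $\rho(\calY(\calS_t))$, use the defining inequality $\|z_{\ast}-z\|_{A(\lambda^{\ast})^{-1}}\leq\sqrt{\psi^{\ast}}\,\Delta(z)\leq\sqrt{\psi^{\ast}}\,2^{-t}$ on $\calS_t$, and triangle-inequality to get $(2^{t+1})^2\rho(\calY(\calS_t))\leq 16\psi^{\ast}$ \emph{termwise}. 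Both yield the same final constant after summing over $\log_2(1/\Delta_{\min})$ rounds, but your argument is shorter, avoids the max-to-average and min/sum swap, and makes explicit that each round already costs $O(\psi^{\ast})$ rather than only the aggregate.

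The first-round edge case you flag (that $\calzhat_1=\calZ$ need not lie in $\calS_1=\{z:\Delta(z)\leq 1/2\}$) is also present in the paper's proof, which silently uses $\calzhat_t\subseteq\calS_t$ at $t=1$ to bound $N_1$; both proofs implicitly assume gaps are at most $1/2$, which is harmless after rescaling.
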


We provide a proof of the sample complexity bound in Section~\ref{sec:SampleComplexityProof}.

\subsection{Interpreting the sample complexity. }
Up to logarithmic factors, Algorithm~\ref{alg:main} matches the lower bound obtained in Theorem~\ref{thm:LowerBound}.  However, the term $\rho(\calY(\calS_t))$ may seem a bit mysterious. In this section we try to interpret this quantity in terms of the geometry of $\calX$ and $\calZ$.

Let $\conv(\calX\cup -\calX)$ denote the convex hull of $\calX\cup -\calX$, and for any set $\calY\subset \mb{R}^d$ define the gauge of $\calY$, \[\gamma_{\calY} = \max\{c>0: c\calY\subset \conv(\calX\cup -\calX)\}.\]
In the case where $\calY$ is a singleton $\calY = \{y\}$, $\gamma(y):=\gamma_{\calY}$ is the \emph{gauge norm} of $y$ with respect to $\conv(\calX\cup -\calX)$, a familiar quantity from convex analysis \cite{rockafellar2015convex}. We can provide a natural upper bound for $\rho(\calY)$ in terms of the gauge.
\begin{lemma}\label{lem:interpLemma} Let $\calY\subset\mb{R}^d$. Then
    \begin{equation}\label{eqn:BoundsOnRho}
    \max_{y\in \calY} \|y\|_2^2/( \max_{x\in\calX} \|x\|_2) \leq \rho(\calY) \leq d/\gamma_{\calY}^2.
    \end{equation}
    In the case of a singleton $\calY = \{y\}$, we can improve the upper bound to $\rho(\calY) \leq 1/\gamma(y)^2$.
\end{lemma}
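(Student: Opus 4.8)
Throughout write $A(\lambda) := \sum_{x \in \calX} \lambda_x x x^\top$ for a distribution $\lambda$ on $\calX$, so that $\rho(\calV) = \min_{\lambda} \max_{v \in \calV} v^\top A(\lambda)^{-1} v$, and recall the standing assumption $\mathrm{span}(\calX) = \reals^d$. I would handle the three inequalities separately.

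\textbf{Lower bound.} This is immediate from a crude operator-norm estimate. For any distribution $\lambda$, since $xx^\top \preceq \|x\|_2^2 I$ and $\sum_x \lambda_x = 1$ we have $A(\lambda) \preceq (\max_{x\in\calX}\|x\|_2^2) I$, hence $A(\lambda)^{-1} \succeq (\max_{x\in\calX}\|x\|_2^2)^{-1} I$ and $v^\top A(\lambda)^{-1} v \ge \|v\|_2^2/\max_{x\in\calX}\|x\|_2^2$ for every $v$. Taking the maximum over $v \in \calY$ and then the minimum over $\lambda$ gives $\rho(\calY) \ge \max_{y\in\calY}\|y\|_2^2/\max_{x\in\calX}\|x\|_2^2$; the displayed form follows under the customary normalization $\max_{x\in\calX}\|x\|_2 \le 1$ (and, more generally, with $\max_x\|x\|_2^2$ in the denominator).

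\textbf{Upper bound, general case.} The plan is to exhibit a single good design, namely a $G$-optimal (equivalently $D$-optimal) design $\lambda^\ast$ for $\calX$. By the Kiefer--Wolfowitz equivalence theorem \cite{pukelsheim2006optimal}, $\max_{x\in\calX}\|x\|^2_{A(\lambda^\ast)^{-1}} = d$. Since $q(v) := v^\top A(\lambda^\ast)^{-1} v$ is a convex, even quadratic, its maximum over the symmetric polytope $\conv(\calX\cup -\calX)$ is attained at a vertex, i.e.\ at some $\pm x$ with $x\in\calX$, so $\max_{v\in\conv(\calX\cup-\calX)} q(v) = d$. By definition of the gauge, $\gamma_{\calY}\, y \in \conv(\calX\cup-\calX)$ for every $y\in\calY$, hence $\gamma_{\calY}^2\, q(y) = q(\gamma_{\calY} y) \le d$, i.e.\ $q(y) \le d/\gamma_{\calY}^2$. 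Taking the maximum over $y \in \calY$ and evaluating the defining minimum of $\rho(\calY)$ at $\lambda = \lambda^\ast$ gives $\rho(\calY) \le d/\gamma_{\calY}^2$.

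\textbf{Upper bound, singleton case.} Here the factor $d$ must be removed, so the $G$-optimal design is too blunt; instead I would use the variational identity $\|y\|^2_{M^{-1}} = \max_{\theta\neq 0} (y^\top\theta)^2/(\theta^\top M\theta)$ together with a design tailored to $y$. Since $\gamma(y)\,y \in \conv(\calX\cup-\calX)$, write $\gamma(y)\,y = \sum_{x\in\calX} c_x x$ with $\sum_x|c_x| \le 1$, and set $\lambda_x = |c_x|$ (adding vanishing uniform mass to restore full support and invertibility, using continuity and $\mathrm{span}(\calX)=\reals^d$). Then for any $\theta$,
\[ \gamma(y)\,|y^\top\theta| \;=\; \Big|\sum_{x\in\calX} c_x\, x^\top\theta\Big| \;\le\; \sum_{x\in\calX}|c_x|\,|x^\top\theta| \;=\; \sum_{x\in\calX}\lambda_x\,|x^\top\theta| \;\le\; \Big(\sum_{x\in\calX}\lambda_x (x^\top\theta)^2\Big)^{1/2} \;=\; \big(\theta^\top A(\lambda)\theta\big)^{1/2}, \]
the last step being Cauchy--Schwarz. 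Squaring and rearranging, $(y^\top\theta)^2/(\theta^\top A(\lambda)\theta) \le 1/\gamma(y)^2$ for all $\theta$, so $\|y\|^2_{A(\lambda)^{-1}} \le 1/\gamma(y)^2$ and hence $\rho(\{y\}) \le 1/\gamma(y)^2$.

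\textbf{Main obstacle.} The only nonroutine ingredient is the general upper bound: one must invoke the Kiefer--Wolfowitz equivalence theorem to pin the optimal $G$-value at exactly $d$, and then pass from the maximum over $\calX$ to the maximum over $\conv(\calX\cup-\calX)$ via convexity and evenness of $q$; everything else reduces to Cauchy--Schwarz, Jensen, and elementary positive-semidefinite manipulations. One should also keep the assumption $\mathrm{span}(\calX) = \reals^d$ in view throughout so that the relevant information matrices (or their restrictions to the span) are invertible.
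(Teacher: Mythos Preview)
Your proposal is correct and follows essentially the same route as the paper. For the general upper bound you use exactly the paper's argument: rescale by $\gamma_{\calY}$, pass from $\calY$ to $\conv(\calX\cup-\calX)$, reduce the max over the polytope to its vertices by convexity, and invoke Kiefer--Wolfowitz to get $d$. For the lower bound you use the same operator-norm estimate as the paper; your observation that the correct denominator is $\max_{x}\|x\|_2^2$ rather than $\max_{x}\|x\|_2$ is well taken (the paper's stated inequality and its proof both carry this typo). The only substantive difference is the singleton case: the paper simply cites Elfving's theorem, whereas you give a self-contained argument via the variational identity $\|y\|^2_{M^{-1}}=\max_{\theta}(y^\top\theta)^2/\theta^\top M\theta$ and a design built from the convex-combination coefficients of $\gamma(y)y$ --- this is in effect a short proof of Elfving's result and is a nice addition.
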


{
The proof of this Lemma is in Appendix~\ref{sec:interpLemma}.
To see the potential for adaptive gains we focus on the case of linear bandits where $\calX = \calZ$. Consider an example with $\calX = \{e_{i}\}_{i=1}^d\cup \{z'\}$ for $z' = (\cos(\alpha), \sin(\alpha),0,\cdots, 0)$ where $\alpha\in [0,.1)$, and $\theta^{\ast} = e_1$.
Note that $\Delta_{\min} \approx \sin(\alpha) \approx \alpha$.
Then $\calS_1 = \calX$, and an easy comptation shows $\gamma_{\calY(\calX)}\leq 2$.
After the first round, all arms except $e_1$ and $z'$ will be removed, so $\calY(\calS_t) = \{e_1 - z'\}$ for $t\geq 2$, and $\gamma_{\calY(\calS_t)} \approx 1/\sin(\alpha)\approx 1/\alpha$. Summing over all rounds, we see that this implies a sample complexity of
$O(d\log(\log(1/\alpha) d^2/\delta))$ which up to $\log$ factors is independent of the gap and a significant improvement over the static $\calX \calY$-allocation sample complexity of $d/\alpha^2$.



\section{Related Work}\label{sec:related}
When $\calX = \calZ =  \{e_1,\cdots, e_d\}\subset \mathbb{R}^d$ is the set of standard basis vectors, the problem reduces to that of the best-arm identification problem for multi-armed bandits  which has been extensively studied
\cite{even2006action, jamieson2014lil, karnin2013almost,kaufmann2016complexity, chen2015optimal}.
In addition, pure exploration for combinatorial bandits where $\calX = \{e_1,\cdots, e_d\}\subset \mathbb{R}^d$ and $\calZ \subset \{0,1\}^d$ has also received a great deal of attention \cite{chen2017nearly, cao2017disagreement, chen2014combinatorial, chen2016pure}.

In the setting of linear bandits when $\calX = \calZ$, despite a great deal of work in the regret and contextual settings
\cite{abbasi2011improved, li2010contextual, lattimore2016end, dani2008stochastic}, there has been far less work on linear bandits
for pure exploration. This problem was first introduced in \cite{soare2014best} and since then, there have been a few other works on this topic, \cite{tao2018best,karnin2016verification,xu2018fully} that we now discuss.
\begin{itemize}[topsep=0pt, partopsep=0pt, labelindent=0pt, labelwidth=0pt, leftmargin=10pt]
\item Soare et al.~\cite{soare2014best} made the initial connections to G-optimal
    experimental design. That work provides the first passive algorithm with a sample complexity of $O(\tfrac{d}{\Delta_{\min}^2}\log(|\calX|/\delta) + d^2)$.
Note that the $d^2$ comes from the minimum number of samples needed for an efficient rounding procedure and thus could be reduced to $d$ using improved rounding procedures (see section~\cite{allen2017near}).
In addition to the passive algorithm, they provide an adaptive algorithm, $\calX\calY$-adaptive algorithm for linear bandits.
Their algorithm is very similar to ours, with two notable differences.
Firstly, instead of using an efficient rounding procedure, they use a greedy iterative scheme to compute an optimal allocation.
Secondly, their algorithm does not discard items that are provably sub-optimal.
As a result, their sample complexity (up to logarithmic factors) scales as $\max\{M^{\ast}, \psi^{\ast}\}\log(|\calX|/(\Delta_{\min}\delta)) +d^2$ where $M^{\ast}$ is defined (informally) as the amount of samples needed using a static allocation to remove all sub-optimal directions in $\calY(\calX)\setminus\calY^{\ast}(\calX)$.

\item In Tao et al.~\cite{tao2018best}, the focus is on developing different
    estimators with the goal of removing the constant term $d^2$ in Soare et al.'s passive sample complexity.
Instead of using a rounding procedure, they use a different estimator than the OLS estimator $\theta^{\ast}$.
Note that the rounding procedure in \cite{allen2017near} and described in the supplementary could have been applied directly to Soare's static allocation algorithm giving the same sample complexity as the one obtained in~\cite{tao2018best}.
They also provide an adaptive algorithm \emph{ALBA}, that achieves a sample complexity of $O(\sum_{i=1}^d 1/\Delta_{i}^2)$ where $i$ is the $i$-th smallest gap of the vectors in $\calX$.
It is easy to see that this sample complexity is not optimal: imagine a situation in which the vectors of $\calX$ with the $(d-1)$-smallest gaps are identical to the vector $x' \neq x^\star$. Then we only need to pay once for the samples needed to remove $x'$, not $(d-1)$-times. 
Finally, their algorithms do not compute the optimal allocation over differences
of vectors in $\calX$, but instead on $\calX$ directly \`{a} la G-optimal design.
We will see the inefficiency of this strategy in the experiments.

\item Karnin \cite{karnin2016verification} provides an algorithm that uses
    repeated rounds (for probability amplification) of exploration phases
    combined with verification phases to provide an asymptotically optimal algorithm, meaning when
    $\delta\rightarrow 0$ the sample complexity divided by $\log(1/\delta)$
    approaches $\psi^{\ast}$. Though this is a nice theoretical result, the
    algorithm is not practical; the exploration phase is simply a na\"{i}ve passive $G$-optimal design. 

\item In Xu et al.~\cite{xu2018fully}, a fully adaptive algorithm, inspired by
    the UGapE algorithm~\cite{gabillon2012best}, LinGapE is proposed. Since LinGapE is fully adaptive, a confidence bound allowing for dependence in the samples is necessary and the authors employ the self-normalized bound of~\cite{abbasi2011improved}. 
    The algorithm requires each arm to be pulled once - an undesirable characteristic of a linear bandit algorithm since the structure of the problem allows for information to be obtained about arms that are not pulled. A recent work \cite{kazerouni2019best}, extends this algorithm to the setting of generalized linear models where the expected reward of pulling arm $z$ reward is given by a noisy ovservation of a non-linear link function of $z^{\top}\theta^{\ast}$.
\end{itemize}
Finally, we mention \cite{yu2006active}, which considers transductive experimental design from a computational and optimization perspective, and explores $\calX \calY$-allocation for arbitrary kernels.


\section{Experiments}
In this section, we present simulations for the linear bandit pure exploration problem and the general transductive bandit problem.
We compare our proposed algorithm with both adaptive and non-adaptive strategies.
The adaptive strategies are $\cal{XY}$-Adaptive allocation from \cite{soare2014best}, LinGapE from \cite{xu2018fully}, and ALBA from~\cite{tao2018best}, and the non-adaptive strategies are static $\cal{XY}$-allocation, as described in Section~\ref{sec:LinearExperimentalDesign}, and an oracle strategy that knows $\theta^{\ast}$ and samples according to $\lambda^{\ast}$.
We do not compare to the algorithm given in~\cite{karnin2016verification} since it is primarily a theoretical contribution and in moderate-confidence regimes obtains only the non-adaptive sample complexity.
We run each algorithm at a confidence level of $\delta=0.05$. The empirical failure probability of each of the algorithms in the presented simulations is zero.
To compute the samples for RAGE, we first used the Frank-Wolfe algorithm (with a precise stopping condition in the supplementary) to find $\lambda_t$, and then a rounding procedure from \cite{pukelsheim2006optimal} with various values of $\epsilon < 1/3$. Further implementation details of RAGE and discussion pertaining to the implementation of the other algorithms can be found in the supplementary material in Section~\ref{sec:implementation_details}.
We remark here that in our implementation of the $\calX\calY$-Adaptive allocation, we follow the experiments in~\cite{soare2014best} and allow for provably suboptimal arms to be discarded (though this is not how the algorithm is written in their paper). The resulting algorithm is then similar to our algorithm.

\textbf{Linear bandits: benchmark example.} The first experiment we present has become a benchmark in the linear bandit pure exploration literature since it was introduced in~\cite{soare2014best}.
In this problem, $\calX = \{e_1, \ldots, e_d, x'\} \subset \mb{R}^d$
where $e_i$ is the $i$-th standard basis vector,
$x'= \cos(.01)e_1 + \sin(.01)e_2$, and $\theta^{\ast}= 2e_1$ so that $x_{\ast} = x_1$.
An efficient sampling strategy for this problem needs to focus on reducing uncertainty in the direction $(x_1-x_{d+1})$, which can be achieved by focusing pulls on arm $x_2 = e_2$ since it is most aligned with this direction.


The results for this experiment are shown in Fig.~\ref{fig:sincos}.
The proposed RAGE algorithm performs competitively with existing algorithms and
the oracle allocation. The $\calX\calY$-Adaptive algorithm is similar to RAGE, but with weaker theoretical guarantees, so naturally it performs nearly equivalently.
The LinGapE algorithm performs well when the number of dimensions and arms is small.
However, as the number of arms grows, LinGapE will fall behind the competing algorithms. ALBA performs the worst of the recently proposed algorithms and this is to be expected since it computes an allocation on the $\calX$ set instead of on the $\calY(\cal{X})$ set.
This example clearly highlights the gains of adaptive sampling over non-adaptive allocations like static $\cal{XY}$-allocation. However, since $\calX$ is relatively small in this case, it fails to tease out important differences between the algorithms that can greatly increase the sample complexity. We construct examples to demonstrate these claims now.

\textbf{Many arms with moderate gaps.}
In this example, for a given value of $n\geq 3$, we construct a set of arms $\calX \subset \mb{R}^2$, where
$\calX = \{e_1, \cos(3\pi/4)e_1 + \sin(3\pi/4)e_2\}\cup \{\cos(\pi/4 +\phi_i)e_1 + \sin(\pi/4 + \phi_i)e_2\}_{i=3}^{n}$ with $\phi_i\sim \calN(0,.09)$ for each $i\in \{3, \ldots, n\}$.
The parameter vector is fixed to be $\theta^{\ast} = e_1$ so that $x_1$ is the optimal arm, $x_2$ gives the most information to identify the optimal arm, and the remaining arms roughly point in the same direction with an expected gap of $\Delta\approx 0.3$.

In Fig.~\ref{fig:duplicate}, we show the results of the experiment as we increase the number of arms $n$.
We focus on the LinGapE algorithm in this example to demonstrate a linear scaling in the number of arms that occurs since LinGapE samples each arm once.
An efficient sampling strategy should focus energy on $x_2$, and as it does so, it will gain information about the arms that are nearly duplicates of each other, which is how RAGE performs.

\textbf{Uniform Distribution on a Sphere.} In this example, $\calX$ is sampled from a unit sphere of dimension $d=5$ centered at the origin. Following~\cite{tao2018best}, we select the two closest arms $x, x' \in \calX$ and let $\theta^{\ast}= x + \alpha (x'-x)$ where $\alpha = 0.01$. In Fig.~\ref{fig:uniform}, we show the sample complexity of the RAGE and ALBA algorithms as the number of arms is increased.
The RAGE algorithm significantly outperforms ALBA and this is primarily due to the fact that ALBA computes a G-optimal design on the active vectors in each round instead of on the differences between these vectors. Thus the ALBA sampling distribution can be focused on a very different set of arms from the optimal one.

\textbf{Transductive example.}
To conclude our experiments, we present a general transductive bandit example.
Since the existing algorithms in the linear bandit literature do not generalize to this problem, we compare with a static $\calX\calY$-allocation on $\calX, \calY(\calZ)$ and an oracle $\calX\calY$-allocation on $\calX, \calY^{\ast}(\calZ)$ that knows the optimal arm and the gaps.
We construct an example in $\mathbb{R}^d$ with $d$ even where $\calX = \{e_1, \ldots, e_d\}$.
The set $\calZ$ is also chosen so $|\calZ| = d$, the first $d/2$ vectors are given by ${z_1,\dots, z_{d/2}} =(e_1, \dots, e_{d/2})$ and then  $z_{d/2+j} = \cos(.1)e_{j}+ \sin(.1)e_{j+ d/2}$ for each $j\in\{1,\dots,d/2\}$.
Take $\theta^{\ast}=e_1$ so $z_1$ is the optimal arm.
The results of this simulation are depicted in Fig.~\ref{fig:transductive}.
The RAGE algorithm significantly outperforms the static allocation.
\renewcommand{\figurename}{Figure}
\begin{figure}[t]
    \centering
    \subfloat[][Benchmark]{\includegraphics[width=0.5\linewidth]{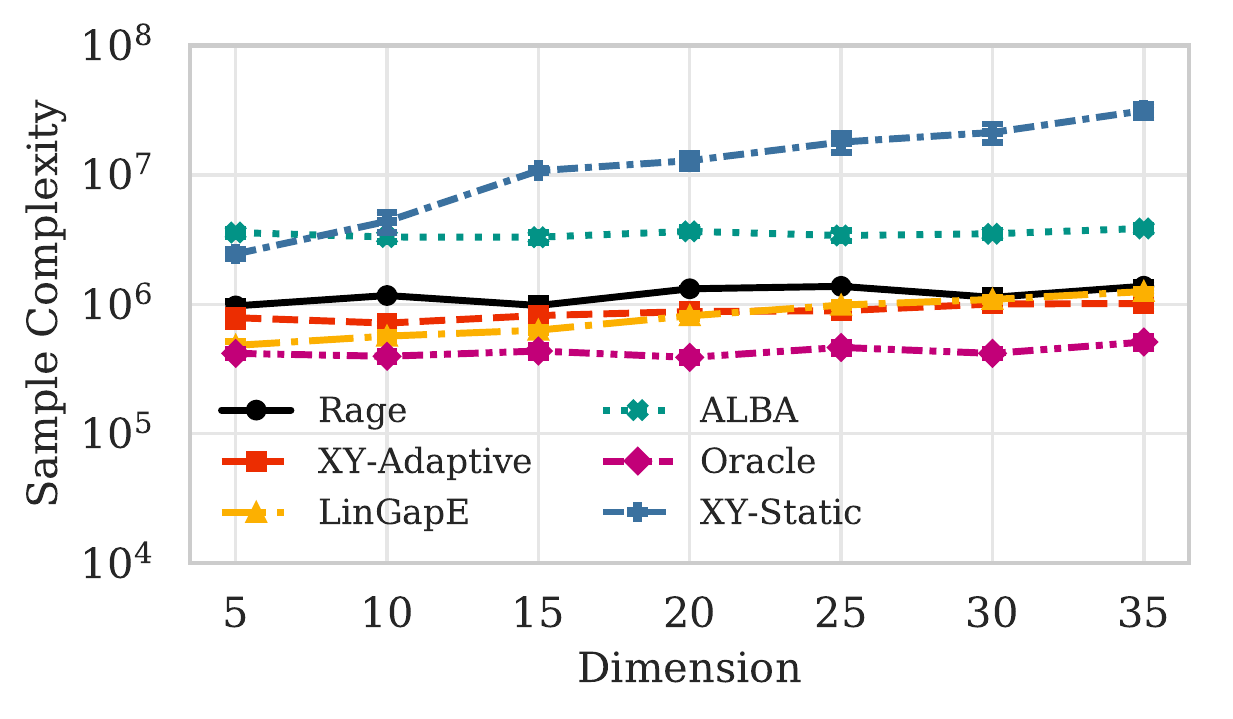}\label{fig:sincos}}
    \subfloat[][Duplicate arms]{\includegraphics[width=0.5\linewidth]{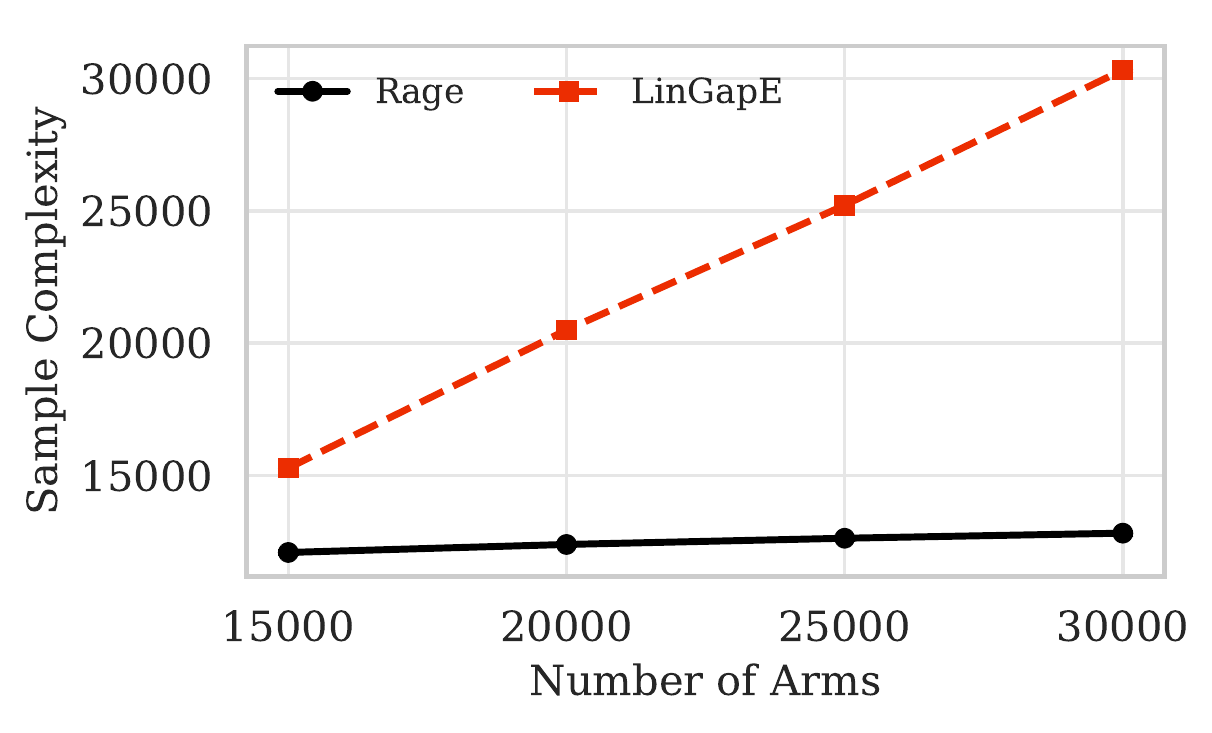}\label{fig:duplicate}}\\
    \subfloat[][Uniform sphere]{\includegraphics[width=0.5\linewidth]{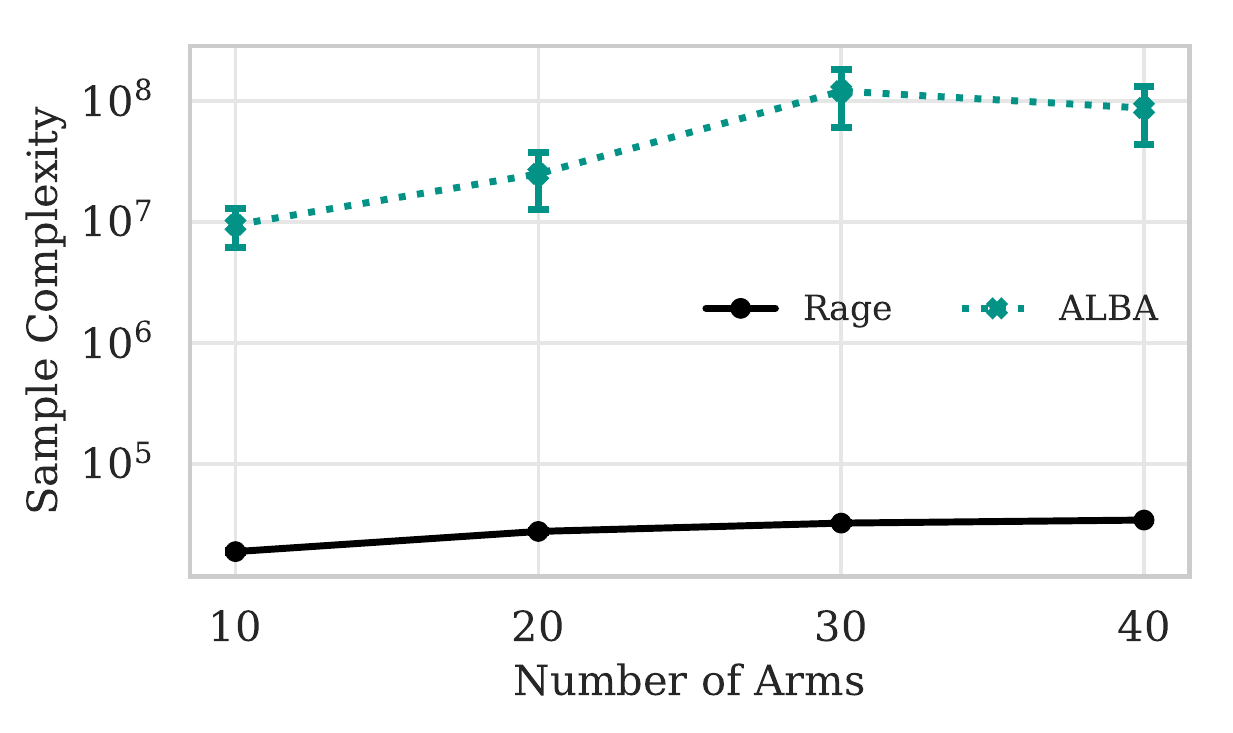}\label{fig:uniform}}
    \subfloat[][Transductive]{\includegraphics[width=0.5\linewidth]{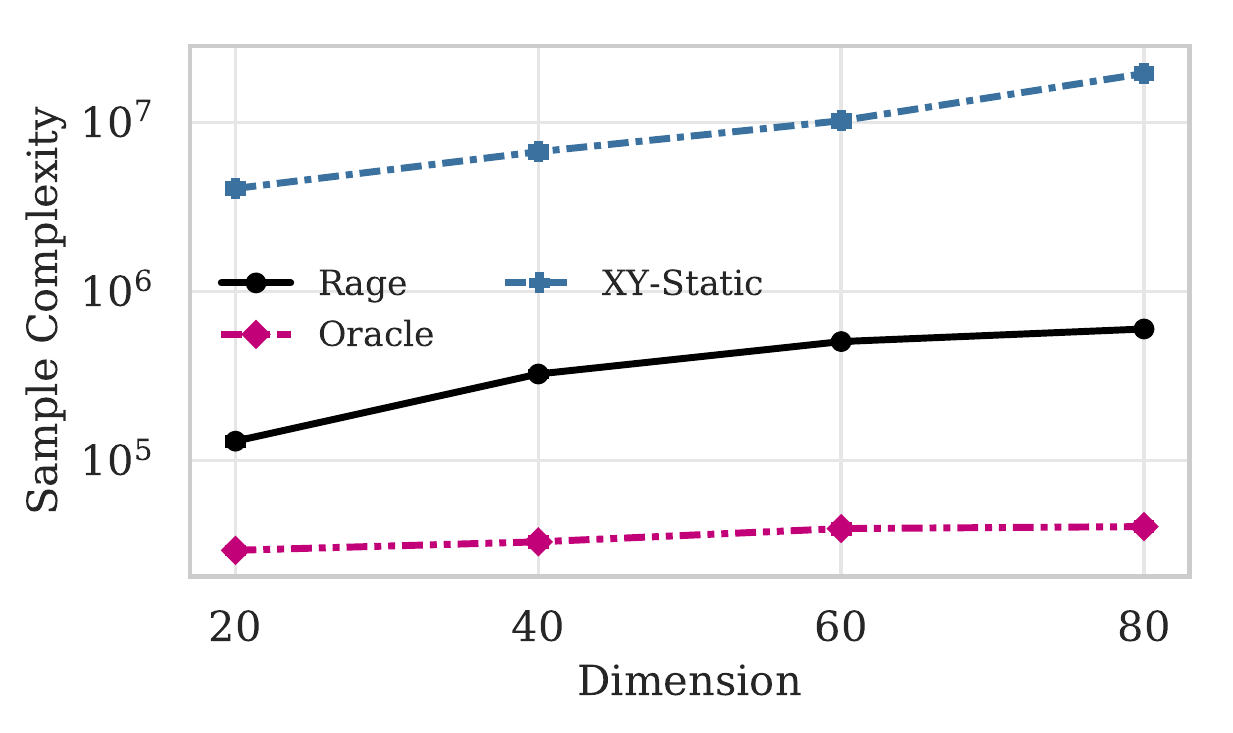}\label{fig:transductive}}
    \caption{Simulation Results}
\end{figure}

\section{Conclusion}
In this paper we have proposed the problem of best-arm identification for transductive linear bandits, provided an algorithm, and matching upper and lower bounds. As a remark it is straightforward to exit our algorithm early with an $\varepsilon$-good arm. It still remains to develop anytime algorithms for this problem, as has been done in pure exploration for multi-armed bandits \cite{jamieson2014lil} that do not throw out samples. In addition, we suspect our algorithm actually matches the lower-bound and the $\log(1/\Delta_{\min})$ factor is unnecessary. Finally, it is possible that some of the ideas developed here extend to the setting of regret and could be used to give instance based regret bounds for linear bandits~\cite{lattimore2016end}. We hope to explore connections to both the regret and fixed budget settings in further works.


\newpage
\nocite{*}
\bibliographystyle{plain}
\bibliography{neurips_2019}

\newpage
\appendix

\section{Proof of Theorem~\ref{thm:SampleComplexity}}\label{sec:SampleComplexityProof}
\begin{proof}
Let the good event for the $\phaseindex$th round of Algorithm~\ref{alg:main} be
\[\calE_{t} := \big\{N_t\leq \max\{\lceil 8(2^{t+1})^2 \rho(\calY(\calS_t))(1+ \epsilon)\log(\tfrac{|\calZ|^2}{\delta_{t}})\rceil, r(\epsilon)\big\} \big\} \ \cap \ \{ z_{\ast} \in \calzhat_{t+1}  \} \ \cap \ \{ \calzhat_{t+1} \subseteq \calS_{t + 1}\}\]
where we recall that $\calS_t = \{z\in \calZ: \Delta(z)\leq 2^{-t}\}$.
The good event characterizes the worst-case sample complexity of the $t$-th phase of Algorithm~\ref{alg:main} and guarantees that the set of active arms at the end of the phase contains the optimal arm and is contained in the set of arms with gaps below the threshold that is to be eliminated in the phase.
Note that for $t > \log_2(1/\Delta_{\min})$ we have $S_t = \{ z_{\ast} \}$.

The proof proceeds as follows. We begin by showing that the good event holds with probability at least $1-\delta_{t}$ in phase $t$ given that the good event held in phase $t-1$. We then show that the probability of the good event holding in every phase is at least $1-\delta$. As a result, we simply sum over the bound on the sample complexity in each phase given in the good event to obtain the stated bound on the sample complexity.

The following lemma shows that good event holds in phase $t$ with probability at least $1-\delta_{t}$ conditioned on the good event holding in phase $\phaseindex-1$.

\begin{lemma}\label{lemma:condition_delta}
$\prob(\calE_{t}|\calE_{t-1},\cdots,  \calE_{1}) \geq 1- \delta_{t}$.
\end{lemma}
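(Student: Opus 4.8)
The plan is to condition throughout on $\calE_{t-1}\cap\cdots\cap\calE_1$ (which is measurable with respect to the history up to the start of round $t$) and to verify the three events composing $\calE_t$ separately: the bound on $N_t$ will hold deterministically given the conditioning, while $\{z_{\ast}\in\calzhat_{t+1}\}$ and $\{\calzhat_{t+1}\subseteq\calS_{t+1}\}$ will both hold on a single high-probability ``confidence event'' for round $t$. For the deterministic part, I would use that $\calE_{t-1}$ forces $z_{\ast}\in\calzhat_t$ and $\calzhat_t\subseteq\calS_t$ (for $t=1$ this is just $\calzhat_1=\calZ$); since $\calzhat_t\subseteq\calS_t$ implies $\calY(\calzhat_t)\subseteq\calY(\calS_t)$ and $\calV\mapsto\rho(\calV)$ is monotone (a maximum over a larger set, inside a common minimum over $\lambda$), we get $\rho(\calY(\calzhat_t))\le\rho(\calY(\calS_t))$, and feeding this into the definition of $N_t$ in Algorithm~\ref{alg:main} shows the first event in $\calE_t$ holds surely.

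Next I would build the confidence event. Because $N_t\ge r(\epsilon)$, the rounding guarantee yields a design $\armseq_{N_t}=\text{\textsc{Round}}(\lambda_t^{\ast},N_t)$ with $\max_{y\in\calY(\calzhat_t)}\|y\|^2_{A_t^{-1}}\le(1+\epsilon)\rho(\calY(\calzhat_t))/N_t$, using that $\lambda_t^{\ast}$ attains $\rho(\calY(\calzhat_t))$. Conditionally on the history at the start of round $t$, the set $\calzhat_t$ and this design are fixed while the round-$t$ noise is fresh, so \eqref{eqn:ConfidenceSet} applies to $\widehat{\theta}_t$ with confidence parameter $\delta_t/|\calZ|^2$; a union bound over the at most $|\calZ|^2$ directions in $\calY(\calzhat_t)$, together with the symmetry of $\calY(\calzhat_t)$, shows that with probability at least $1-\delta_t$ the event $\mathcal{G}_t:=\big\{\forall\, y\in\calY(\calzhat_t):\ |y^\top(\widehat{\theta}_t-\theta^{\ast})|<\sqrt{\tfrac{(1+\epsilon)\rho(\calY(\calzhat_t))}{N_t}\,2\log(|\calZ|^2/\delta_t)}\big\}$ holds. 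Plugging in $N_t\ge 8(2^{t+1})^2\rho(\calY(\calzhat_t))(1+\epsilon)\log(|\calZ|^2/\delta_t)$ bounds the half-width by $\sqrt{2/(8\cdot 2^{2(t+1)})}=2^{-(t+2)}$, so on $\mathcal{G}_t$ we have $|y^\top(\widehat{\theta}_t-\theta^{\ast})|<2^{-(t+2)}$ for all $y\in\calY(\calzhat_t)$.

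It then remains to check that on $\mathcal{G}_t$ the elimination step behaves as intended, which is a short calculation with the gap definition. For $z_{\ast}$: for every $z'\in\calzhat_t\setminus\{z_{\ast}\}$, $(z'-z_{\ast})^\top\widehat{\theta}_t=-\Delta(z')+(z'-z_{\ast})^\top(\widehat{\theta}_t-\theta^{\ast})<-\Delta(z')+2^{-(t+2)}\le 2^{-(t+2)}$, so the removal test is never met and $z_{\ast}\in\calzhat_{t+1}$. Conversely, if $z\in\calzhat_t$ has $\Delta(z)>2^{-(t+1)}$, then taking the witness $z'=z_{\ast}\in\calzhat_t$ gives $(z_{\ast}-z)^\top\widehat{\theta}_t=\Delta(z)+(z_{\ast}-z)^\top(\widehat{\theta}_t-\theta^{\ast})>2^{-(t+1)}-2^{-(t+2)}=2^{-(t+2)}$, so $z$ is removed; hence $\calzhat_{t+1}\subseteq\{z:\Delta(z)\le 2^{-(t+1)}\}=\calS_{t+1}$. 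Together with the deterministic part, $\mathcal{G}_t$ implies the two remaining events of $\calE_t$, so $\prob(\calE_t\mid\calE_{t-1},\dots,\calE_1)\ge 1-\delta_t$.

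The only step needing real care is the conditioning argument: I must justify that, given the history-measurable events $\calE_{t-1},\dots,\calE_1$, the round-$t$ quantities $N_t$, $\lambda_t^{\ast}$, $\armseq_{N_t}$ are determined by $\calzhat_t$ and the round-$t$ rewards are independent of the past, so that the fixed-design inequality \eqref{eqn:ConfidenceSet} legitimately applies (formally, condition further on a realization of $\calzhat_t$ compatible with the conditioning events, apply the bound, and average). The remaining work is bookkeeping --- in particular, checking that the universal constant $8$ is exactly what makes the confidence half-width equal the elimination threshold $2^{-(t+2)}$. One minor caveat at the base case $t=1$ is that $\calS_1$ need not contain all of $\calZ$, but the elimination argument above uses only $z_{\ast}\in\calzhat_t$, never $\calzhat_t\subseteq\calS_t$, so it is unaffected.
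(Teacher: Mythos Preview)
Your proof is correct and follows essentially the same route as the paper: condition on the past good events, use the rounding guarantee together with the definition of $N_t$ to bound the confidence half-width by $2^{-(t+2)}$, union bound over $\calY(\calzhat_t)$, and then verify via the two short gap calculations that $z_{\ast}$ survives and all arms with $\Delta(z)>2^{-(t+1)}$ are eliminated; the $N_t$ bound follows from $\calzhat_t\subseteq\calS_t$ and monotonicity of $\rho$. Your write-up is in fact somewhat more careful than the paper's in making the conditioning explicit and in flagging the $t=1$ base case (where $\calzhat_1=\calZ$ need not sit inside $\calS_1$); note, however, that this base-case caveat affects not just the elimination argument but also your deterministic derivation of the $N_1$ bound---the paper glosses over this point as well.
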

\begin{proof}
Conditioned on a choice of $\calY(\zhatset_t)$, since $\widehat\theta$ is a
least squares estimator of $\theta^{\ast}$ and the noise is i.i.d., we know that $y^{\top}(\theta^{\ast} - \widehat\theta_t)$ is  $\|y\|_{A_{t}^{-1}}^2$-subGaussian for all $y\in \calY(\calzhat_t)$.
Furthermore, due to the guarantees of the rounding procedure, $\|y\|_{A_{t}^{-1}}^2\leq (1+\epsilon) \rho(\calY(\zhatset_t))/N_t \leq \left(8(2^{t+1})^2\log(|\calZ|^2/\delta_t)\right)^{-1}$ for all $y\in \calY(\widehat\calZ_t)$ by our choice of $N_t$. Since the right-hand side is deterministic, independent of $\calY(\zhatset_t)$,
for any $\nu > 0$, we have that
\[\P\left(|y^{\top}(\thetastar - \thetahat)| > \sqrt{\frac{2\log(2/\nu)}{8(2^{t+1})^2\log(|\calZ|^2/\delta_t)}}\Bigg|\calE_{t-1},\cdots,  \calE_{1}\right)\leq \nu\]
for any $y\in \calY(\zhatset_t)$.
Taking $\nu = 2\delta_t/|\calZ|^2$ and union bounding over all the possible $y\in \calY(\zhatset_t)$ where $|\calY(\zhatset_t)| \leq |\calY(\calZ)| \leq |\calZ|^2/2$, gives us that
\begin{equation}\label{eq:ConfidenceBound}
\P(\exists y\in \calY(\widehat{\calZ}_t)\quad|y^{\top}(\thetastar - \thetahat)|
> 2^{-(t+2)}|\calE_{t-1},\cdots, \calE_{1})\leq \delta_t.
\end{equation}

{\emph{Claim 1:}} Every arm $z \in \zhatset_{t}$ such that $\Delta(z) \geq 2^{-(t+1)}$ is discarded in phase $t$ so that $\zhatset_{\phaseindex+1} \subseteq \sset_{t+1}$ with probability at least $1-\delta_{t}$.

{\textbf{Proof}.} 
Since we conditioned on $\event_{t-1}$ , $z_{\ast} \in \zhatset_{t}$.
If $z\in \calS_{t+1}^c\cap \zhatset_t$ then by definition $\Delta(z) = (z_{\ast} - z)^{\top}\theta^{\ast} > 2^{-(t+1)}$. Taking $y = z_{\ast}-z$ by the confidence bound ~\eqref{eq:ConfidenceBound}
\[\yarm^{\top}(\theta^{\ast} - \widehat{\theta}_{t}) \leq 2^{-(t+2)} \Rightarrow y^\top\widehat{\theta}_{t} \geq y^{\top}\thetastar - 2^{-(t+2)} > 2^{-(t+1)} - 2^{-(t+2)} = 2^{-(t+2)}.\]
However, this is precisely the discard condition of the algorithm guaranteeing $z$ will be eliminated. 

We now show that the optimal arm cannot be discarded in a phase with high probability. \qed

{\emph{Claim 2:}} $z_{\ast} \in \calzhat_{t+1}$ with probability at least $1-\delta_{t}$.

{\textbf{Proof}.}  We prove this claim by contradiction. To begin, observe that $z_{\ast}$ is in $\zhatset_{t}$ since $\calE_{t-1}$ holds. Now, suppose that $z_{\ast}$ is discarded in phase $t$. This implies that there exists a $z \neq z_{\ast}$ for $z \in \zhatset_{t}$ such that $2^{-(t+2)}\leq (z-z_{\ast})^{\top}\hatparam_{t}$.
However from the confidence interval~\eqref{eq:ConfidenceBound}, $(z-z_{\ast})^{\top}(\widehat{\theta}_{t}-\theta^{\ast}) \leq 2^{-(t+2)}$. Combining these we see that $(z-z_{\ast})^{\top}(\widehat{\theta}_{t}-\theta^{\ast}) < (z-z_{\ast})^\top\widehat{\theta}_t$
which implies $(z-z_{\ast})^{\top}\theta^{\ast} > 0$ which is a contradiction.\qed


We complete the proof by showing that the sample complexity of phase $t$ given in the good event holds with probability $1-\delta_{t}$. Since $\calE_{t-1}$ is given, $\widehat{\calZ}_{t} \subseteq \sset_{t}$, which implies with probability at least $1-\delta_{t}$,
\begin{align*}
N_{t} &=
\max\big\{\big\lceil 8 (2^{t+1})^2\rho(\yset(\zhatset_{t}))(1+\varepsilon)\log(|\calZ|^2/\delta_{t})\big\rceil, r(\epsilon)\big\} \\
&\leq \max\big\{\big\lceil 8 (2^{t+1})^2\rho(\yset(\calS_{t}))(1+\varepsilon)\log(|\calZ|^2/\delta_{t})\big\rceil, r(\epsilon)\big\}
\end{align*}
where we note that the quantity on the right hand side is deterministic.
\end{proof}

\begin{lemma}\label{lemma:all_clean}
$\prob(\event_{1}\cap \dots \cap \event_{\lceil\log_{2}(1/\Delta_{\min})\rceil}) \geq 1- \delta$.
\end{lemma}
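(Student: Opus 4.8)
The plan is to bootstrap Lemma~\ref{lemma:condition_delta} through a chain rule / union bound argument to control the probability that the good event fails in \emph{some} phase. Let $T = \lceil \log_2(1/\Delta_{\min}) \rceil$. First I would observe that by the multiplicative chain rule,
\[
\prob\!\left(\bigcap_{t=1}^{T} \calE_t\right) = \prod_{t=1}^{T} \prob\!\left(\calE_t \,\middle|\, \calE_{t-1}, \dots, \calE_1\right),
\]
with the $t=1$ term interpreted as the unconditional probability $\prob(\calE_1)$. By Lemma~\ref{lemma:condition_delta} each factor is at least $1 - \delta_t$, so the product is at least $\prod_{t=1}^{T}(1 - \delta_t)$.

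Next I would lower bound this product. Using $1 - x \geq$ the elementary inequality applied termwise, or more simply the union-bound form $\prod_t (1-\delta_t) \geq 1 - \sum_t \delta_t$, it suffices to show $\sum_{t=1}^{T} \delta_t \leq \delta$. Since $\delta_t = \delta / t^2$, we have $\sum_{t=1}^{T} \delta_t = \delta \sum_{t=1}^{T} t^{-2} \leq \delta \sum_{t=1}^{\infty} t^{-2} = \delta \pi^2/6$. Strictly this gives $1 - \delta\pi^2/6$ rather than $1-\delta$; the clean fix (and presumably what the authors intend) is that the $\delta_t$ in the algorithm should be read against the bound $\sum_{t\ge 1} 1/t^2 < 2$, or one rescales $\delta_t \gets \tfrac{6\delta}{\pi^2 t^2}$, or one simply notes $\sum_{t=1}^\infty t^{-2} \le 2$ so choosing $\delta_t = \delta/(2t^2)$ (a harmless constant change absorbed into the $\log$ factors) yields exactly $\sum_t \delta_t \le \delta$. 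I would state the argument with whichever normalization makes $\sum_t \delta_t \le \delta$ hold and note the constant is immaterial.

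The only genuine subtlety — and the step I would be most careful about — is the conditioning structure: Lemma~\ref{lemma:condition_delta} bounds $\prob(\calE_t \mid \calE_{t-1}, \dots, \calE_1)$, i.e. it conditions on the good events of \emph{all} prior phases, not just phase $t-1$. One must check that this is exactly the object appearing in the chain-rule decomposition above, so that the telescoping product is valid; this is immediate from the definition of conditional probability provided $\prob(\calE_{t-1} \cap \cdots \cap \calE_1) > 0$, which holds by induction since each conditional factor is positive. A secondary point worth a sentence: on the complement of $\bigcap_{t \le T} \calE_t$ we have said nothing, but on the event $\bigcap_{t=1}^T \calE_t$ itself, since $\calS_t = \{z_\ast\}$ for $t > \log_2(1/\Delta_{\min})$, the active set has collapsed to $\{z_\ast\}$ by round $T+1$ and the algorithm terminates having output $z_\ast$; combined with the per-phase sample-complexity bounds inside each $\calE_t$, summing over $t = 1, \dots, T$ yields both correctness and the stated worst-case sample complexity of Theorem~\ref{thm:SampleComplexity}. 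I do not expect any real obstacle here — it is a standard "union bound over phases with summable failure budget" argument — so the write-up is short.
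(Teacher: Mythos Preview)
Your chain-rule decomposition and the application of Lemma~\ref{lemma:condition_delta} are exactly what the paper does, and your commentary on the conditioning structure is correct. The divergence is in how the product $\prod_{t\ge 1}(1-\delta_t)$ is lower bounded. You pass to the sum via $\prod_t(1-\delta_t)\ge 1-\sum_t\delta_t$ and then run into the fact that $\sum_{t\ge 1}\delta/t^2=\delta\pi^2/6>\delta$, which forces you to rescale $\delta_t$ (or absorb a constant). The paper instead keeps the product and uses Euler's product formula for the sine,
\[
\prod_{t=1}^{\infty}\Bigl(1-\tfrac{\delta}{t^2}\Bigr)=\frac{\sin(\pi\delta)}{\pi\delta},
\]
together with the elementary inequality $\sin(\pi\delta)/(\pi\delta)\ge 1-\delta$ for $\delta\in(0,1)$. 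This yields $1-\delta$ on the nose with the algorithm's stated choice $\delta_t=\delta/t^2$, no rescaling needed.

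Your argument is perfectly sound as a proof technique and is the more common route in the literature; the paper's trick is a slicker way to make the constants work out exactly as written in Algorithm~\ref{alg:main}. Either approach suffices for Theorem~\ref{thm:SampleComplexity}, since any constant in $\delta_t$ is absorbed into the $\log$ factors of the sample complexity.
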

\begin{proof}
Let us first expand the intersection of the events into a product of conditional
probabilities as follows:
\begin{equation*}
\prob(\event_{1}\cap \dots \cap \event_{\lceil\log_{2}(1/\Delta_{\min})\rceil}) = \Pi_{\phaseindex=1}^{\lceil\log_{2}(1/\Delta_{\min})\rceil} \prob(\event_{\phaseindex}|\event_{\phaseindex-1}\cap\dots\cap \event_{1})
\end{equation*}
We now obtain a lower bound on the success probability using Lemma~\ref{lemma:condition_delta} and facts about infinite products:
\begin{equation*}
\Pi_{\phaseindex=1}^{\lceil\log_{2}(1/\Delta_{\min})\rceil} \prob(\event_{\phaseindex}|\event_{\phaseindex-1}\cap\dots\cap \event_{1}) \geq \Pi_{\phaseindex=1}^{\lceil\log_{2}(1/\Delta_{\min})\rceil} (1-\delta_{\phaseindex})
\geq \Pi_{t=1}^{\infty} \Big(1-\frac{\delta}{t^2}\Big)
= \frac{\sin(\pi\delta)}{\pi\delta}.
\end{equation*}
Finally, using the fact that $\frac{\sin(\pi\delta)}{\pi\delta} \geq 1- \delta$ for $\delta\in (0, 1)$, we obtain the result
$\prob(\event_{1}\cap \dots \cap \event_{\lceil\log_{2}(1/\Delta_{\min})\rceil}) \geq 1- \delta$.
\end{proof}

The final result then follows immediately from Lemmas~\ref{lemma:condition_delta} and~\ref{lemma:all_clean} since we can now sum the number of samples taken in each phase to get the sample complexity. With probability at least $1-\delta$,
\begin{align*}
\randnumsamples &\leq \sum_{\phaseindex=1}^{\lfloor \log_2(1/\Delta_{\min}) \rfloor} \max\big\{\big\lceil8 (2^{\phaseindex+1})^2 \rho(\yset(\sset_{\phaseindex}))(1+ \epsilon)\log(t^2|\calZ|^2/\delta)\big\rceil, r(\epsilon)\big\} \\
&\leq 128 \psi^\ast (1+ \epsilon) \log(1/\Delta_{\min}) \log(\log_2(1/\Delta_{\min})^2|\calZ|^2/\delta)  +  (1+r(\epsilon)) \log_2(1/\Delta_{\min}).
\end{align*}

Recall that $\calY^{\ast}(\calS) = \{z_{\ast} -z: \forall \ z \in \calS \setminus z_*\}$. To see the second inequality, note that
\begin{align*}
\psi^\ast &= \min_{\lambda \in \simplex_{\calX}} \max_{y \in \calY^{\ast}(\calZ)} \frac{\|y\|_{(\sum_{x\in \calX} \lambda_x xx^{\top})^{-1}}^2}{\Delta(y)^2} \\
&= \min_{\lambda \in \simplex_{\calX}} \max_{t\leq \lfloor\log_2(1/\Delta_{\min})\rfloor}\max_{y \in \calY^{\ast}(\calS_t)}\frac{\|y\|_{(\sum_{x\in \calX} \lambda_x xx^{\top})^{-1}}^2}{\Delta(y)^2} \\
&\geq \min_{\lambda \in \simplex_{\calX}}\max_{t\leq \lfloor\log(1/\Delta_{\min})\rfloor}\max_{y \in \calY^{\ast}(\calS_t)}\frac{\|y\|_{(\sum_{x\in \calX} \lambda_x xx^{\top})^{-1}}^2}{(2^{-t})^2} \\
&\overset{(i)}{\geq}  \frac{1}{\log_2(1/\Delta_{\min})}  \min_{\lambda\in\simplex_{\calX}} \sum_{t=1}^{\lfloor\log_2(1/\Delta_{\min})\rfloor} \max_{y \in \calY^{\ast}(\calS_t)} \frac{\|\yarm\|_{(\sum_{x\in \calX} \lambda_x xx^{\top})^{-1}}^2}{(2^{-t})^2}\\
&\overset{(ii)}{\geq} \frac{1}{\log_2(1/\Delta_{\min})} \sum_{t=1}^{\lfloor\log_2(1/\Delta_{\min})\rfloor}  (2^{t})^2 \min_{\lambda\in \simplex_{\calX}} \max_{\yarm \in \yset^{\ast}(\sset_t)} \|\yarm\|_{(\sum_{x\in \calX} \lambda_x xx^{\top})^{-1}}^2  \\
&\overset{(iii)}{\geq} \frac{1}{4 \log_2(1/\Delta_{\min})} \sum_{t=1}^{\lfloor\log_2(1/\Delta_{\min})\rfloor} (2^{t})^2 \min_{\lambda\in \simplex_{\calX}} \max_{\yarm \in \yset(\sset_t)} \|\yarm\|_{(\sum_{x\in \calX} \lambda_x xx^{\top})^{-1}}^2  \\
&= \frac{1}{4 \log_2(1/\Delta_{\min})}\sum_{i=1}^{\lfloor\log(1/\Delta_{\min})\rfloor} (2^{t})^2\rho(\calY(S_t))
\end{align*}
where $(i)$ follows from the fact that the maximum of positive numbers is always less than the average, and $(ii)$ by the fact that the minimum of a sum is greater than the sum of minimums.
To see $(iii)$, note that for $y\in \calY(\calS_t)$, if $y=z_i-z_j$, then $y = (z_i - z_{\ast}) - (z_{\ast} - z_j)$.
Hence $\max_{y\in \calY(\calS_t)} \|y\|_{(\sum_{x\in \calX} \lambda_x xx^{\top})^{-1}}^2 \leq 4 \max_{y\in \calY^{\ast}(\calS_t)} \|y\|_{(\sum_{x\in \calX} \lambda_x xx^{\top})^{-1}}^2$.
\end{proof}

\section{Efficient Rounding Procedures}\label{sec:rounding}

Throughout the following we assume that $\calY\subset \mathbb{R}^d$ is arbitrary and that $\calX=\{x_1, \cdots, x_n\}\subset \mathbb{R}^d$ is a subset with $\dim\text{span}(\calX) = d$.

\begin{definition}
A \emph{rounding procedure} is an algorithm that takes as input $\lambda\in
\simplex^n$, a set of vectors $\calX$, and a number of samples $N$ and returns a
finite \textbf{allocation} $s=(s_1, \cdots, s_n)\in \mathbb{N}^n$ satisfying the
following properties: 1. $\sum_{i=1}^n s_i = N$; 2. there exists a function
$r(\epsilon)$ such that if $N > r(\epsilon)$, then
    $\max_{y\in \calY} \|y\|^2_{(\sum_{i=1}^n s_i x_i x_i^{\top})^{-1}} \leq (1+\epsilon) \max_{y\in \calY} \|y\|^2_{(\sum_{i=1}^n \lambda_i x_i x_i^{\top})^{-1}}/N$.
\end{definition}

Fortunately, there has been extensive work on efficient rounding procedures, motivated by the strong connection to G-optimal design in optimal linear experimental design \cite{pukelsheim2006optimal}. 
Here we discuss two important rounding procedures. The first is due to \cite{pukelsheim2006optimal} and has an $r(\epsilon) = d^2/\epsilon$ where the $d^2$ arises from the support size of $\lambda$.

{\bf Rounding Procedure of \cite{pukelsheim2006optimal}.} An efficient rounding
procedure is given in Chapter 12 of~\cite{pukelsheim2006optimal} to transform a design $\design \in \simplex^n$ into a discrete allocation $s \in \mathbb{N}^n$ for any fixed number of samples $N$.
The rounding procedure determines the number of pulls $N_i$ to allocate to each
arm $\arm_{\armindex}$ in the support of $\design$ such that
$\sum_{\armindex\leq \support} N_{\armindex}=N$ where $\support$ is the
cardinality of the support of $\design$.
The discrete allocation from the rounding procedure is obtained in two phases:
\begin{enumerate}[topsep=-2pt]
\item Given the number of samples $N$ to obtain and the cardinality of the support of $\design$, samples to allocate to arms in the support of $\design$ are computed using $N_{\armindex} = \lceil (N - \tfrac{1}{2}\support)\design_{\armindex} \rceil$, where $N_1, N_2, \dots, N_{\support}$ are positive integers constrained such that $\sum_{\armindex\leq \support} N_{\armindex}\geq N$.
\item Following the previous phase of the rounding procedure, loop until the
    discrepancy $(\sum_{\armindex\leq \support} N_{\armindex})-N=0$, from either
    decreasing a sample count $N_j$ which obtains
    $(N_{\sampindex}-1)/\design_{\sampindex}=\min_{\armindex\leq \support}(N_i -
    1)/\design_{\armindex}$ to $N_{\sampindex}+1$, or increasing a sample count
    $N_{\sampindex}$ which obtains
    $N_{\sampindex}/\design_{\sampindex}=\max_{\armindex \leq \support}
    N_i/\design_{\armindex}$ to $N_j-1$.
\end{enumerate}
The efficient design apportionment theorem in~Section 12.5
of~\cite{pukelsheim2006optimal} provides the foundation the procedure; details
on the characterization of efficiency are provided in Chapter 12 of the same
reference.


%

\textbf{Rounding Procedure of \cite{allen2017near}.} We refer the reader to Algorithm 1 in \cite{allen2017near} for details about their rounding procedure. Here we describe their result and how to modify it to our setting. Let $\mathcal{S}_{b,k} = \{s\in [b]^n: \sum_{i=1}^n s_i \leq k\}$ and a continuous relaxation $\mathcal{C}_{b,k} = \{s\in [0,b]^n: \sum_{i=1}^n s_i \leq k\}$.
\begin{theorem}[Theorem 2.1 of \cite{allen2017near}]
    Suppose $\epsilon \in (0,1/3]$, $n\geq k\geq 180 d/\epsilon^2$, $b\in [k]$. Let $\pi\in C_{b,k}$, then in polynomial-time (in $n$ and $d$) we can round $\pi$ to an integral solution $\widehat{s} \in S_{b,k}$ satisfying $\max_{y\in \calY} \|y\|^2_{(\sum \widehat{s}_i x_i x_i^{\top})^{-1}} \leq (1+\epsilon) \max_{y\in \calY} \|y\|^2_{(\sum \pi_i x_i x_i^{\top})^{-1}}$.
\end{theorem}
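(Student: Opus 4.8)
The plan is to reduce the claimed guarantee on the $\calY$-restricted objective to a purely spectral one-sided rounding statement, and then to establish that statement via a regret-minimization (online learning) argument in the spirit of Batson--Spielman--Srivastava barriers. First I would whiten: assuming $M^\ast := \sum_i \pi_i x_i x_i^\top$ is invertible (otherwise the right-hand side is infinite and there is nothing to prove), set $\tilde{x}_i := (M^\ast)^{-1/2} x_i$, so that $\sum_i \pi_i \tilde{x}_i \tilde{x}_i^\top = I_d$. Under this change of variables $\|y\|^2_{(M^\ast)^{-1}} = \|(M^\ast)^{-1/2}y\|_2^2$ and $\|y\|^2_{\widehat{M}^{-1}} = \|(M^\ast)^{-1/2} y\|^2_{\widehat{N}^{-1}}$, where $\widehat{M} := \sum_i \widehat{s}_i x_i x_i^\top$ and $\widehat{N} := \sum_i \widehat{s}_i \tilde{x}_i \tilde{x}_i^\top = (M^\ast)^{-1/2}\widehat{M}(M^\ast)^{-1/2}$. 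Hence it suffices to produce integral $\widehat{s} \in \calS_{b,k}$ with $\widehat{N} \succeq (1+\epsilon)^{-1} I_d$ in the Loewner order, since this gives $\widehat{N}^{-1} \preceq (1+\epsilon) I_d$ and therefore $\|w\|^2_{\widehat{N}^{-1}} \le (1+\epsilon)\|w\|^2_2$ simultaneously for every $w$, in particular for all $w = (M^\ast)^{-1/2} y$. The key simplification is that a single \emph{spectral} lower bound controls every direction at once, so the particular set $\calY$ never enters and can be ignored.

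So the whole problem becomes the following one-sided rounding: given $\tilde{x}_1, \dots, \tilde{x}_n$ and $\pi \in \calC_{b,k}$ with $\sum_i \pi_i \tilde{x}_i \tilde{x}_i^\top = I_d$, find integral $\widehat{s} \in [b]^n$ with $\sum_i \widehat{s}_i \le k$ and $\lambda_{\min}\big(\sum_i \widehat{s}_i \tilde{x}_i \tilde{x}_i^\top\big) \ge (1+\epsilon)^{-1}$ (it suffices to guarantee $1-\epsilon'$ for a suitable $\epsilon' = \Theta(\epsilon)$, the constant being absorbed into the threshold on $k$). I would build $\widehat{s}$ greedily over at most $k$ rounds, selecting one copy of some $\tilde{x}_i$ per round while never exceeding $b$ copies of any single index $i$ (which enforces $\widehat{s} \in \calS_{b,k}$). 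The selection is driven by a Follow-The-Regularized-Leader iterate over the spectrahedron $\{A \succeq 0 : \text{Tr}(A) = 1\}$ using an $\ell_p$-type barrier regularizer $R(A) = -c\,\text{Tr}(A^{1-1/q})$: at round $t$ I maintain the dual matrix $A_t$ and add a copy of the admissible index $i$ (one not yet used $b$ times) maximizing the ``reward'' $\langle A_t, \tilde{x}_i \tilde{x}_i^\top\rangle$.

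The analysis then has two ingredients. First, an \emph{availability} estimate: because $\sum_i \pi_i \langle A_t, \tilde{x}_i \tilde{x}_i^\top\rangle = \langle A_t, I_d\rangle = \text{Tr}(A_t) = 1$ and $\sum_i \pi_i \le k$, some index carries reward at least $1/k$, and since $\pi_i \le b$ the fractional comparator itself respects the cap, so a high-reward \emph{admissible} index always exists and the chosen vectors steadily push up the smallest eigenvalue. Second, a matrix \emph{regret bound}: the standard FTRL analysis with the $\ell_p$-barrier shows the cumulative reward $\sum_{t} \langle A_t, \tilde{x}_{i_t}\tilde{x}_{i_t}^\top\rangle$ lower-bounds $\lambda_{\min}(\widehat{N})$ up to an additive regret of order $\sqrt{d\,k}$, provided the per-step width $\langle A_t, \tilde{x}_{i_t}\tilde{x}_{i_t}^\top\rangle$ stays $O(1)$, which the barrier's curvature guarantees. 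Dividing by $k$, the main term is $\Theta(1)$ while the regret contributes relative error $O(\sqrt{d/k})$; choosing $q$ and $c$ optimally and imposing $k \ge 180\,d/\epsilon^2$ forces this relative error below $\epsilon$, yielding $\lambda_{\min}(\widehat{N}) \ge 1-\epsilon$. Polynomial runtime is immediate: each of the $\le k$ rounds needs one spectral function of the $d \times d$ matrix $A_t$ and a scan of the $n$ candidate indices.

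The hard part will be the matrix regret bound, specifically controlling the second-order (width) term of the $\ell_p$-barrier FTRL so the regret is only $O(\sqrt{dk})$ rather than scaling linearly in $d$; this is exactly where the BSS-style choice of a square-root-type barrier together with the $b$-cap on each coordinate are essential, since an unbounded per-step increment would blow up the curvature penalty and destroy the $d/\epsilon^2$ sample threshold. The availability estimate and the whitening reduction are comparatively routine once the comparator $\pi$ is used to certify that a high-reward admissible index always exists.
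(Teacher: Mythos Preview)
This theorem is not proved in the present paper: it is quoted verbatim as Theorem~2.1 of \cite{allen2017near} and used as a black box to justify the existence of an efficient rounding procedure with $r(\epsilon)=O(d/\epsilon^2)$. The paper contains no proof of it, only the short paragraph after the statement explaining how to instantiate it (duplicate vectors so that $n\ge k$, set $b=k=N$, $\pi=N\lambda$). So there is nothing in this paper to compare your attempt against.

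For what it is worth, your outline does track the actual argument in \cite{allen2017near}: the whitening reduction to a one-sided spectral lower bound $\lambda_{\min}(\widehat N)\ge 1-O(\epsilon)$, followed by a greedy/regret-minimization construction driven by a barrier regularizer over the spectraplex, is precisely their method. The one place your sketch is thin is the availability step under the cap $b$: the averaging argument only gives an index with reward $\ge 1/k$, not necessarily an \emph{admissible} one, and \cite{allen2017near} handle this via an add--and--swap update (add one copy of a high-reward index, drop one copy of a low-reward already-selected index) rather than a pure add. If you intend to write this out fully you will need that swap mechanism; otherwise the per-index cap can block the greedy step. But again, none of this is in the paper under review.
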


To apply this theorem to obtain an efficient rounding procedure, consider the following. Given a $\lambda\in \simplex_{\calX}$, and a number of samples $N$, let $\pi = N\lambda$ and consider the case where $b=k=N$. Then $k\lambda\in C_{k,k}$. In general the theorem does not allow $N=k>n$, but we can circumvent this by just duplicating each vector in $\calX$ exactly $N$ times. Then the allocation $\widehat{s}$ obtained will satisfy the conditions of the above with $r(\epsilon) = 180d/\epsilon^2$. The authors remark that it is most likely true that $r(\epsilon) = d/\epsilon^2$ suffices, but we are not aware of any such result in the literature.

\section{Proof of Theorem \ref{thm:LowerBound}}\label{sec:proofOfLowerBound}
\begin{proof}
In this section we assume $\calX = \{x_1, \cdots, x_n\}$ and $\calZ = \{z_1, \cdots, z_m\}$.
Without loss of generality, we assume that $z_1 = \argmax_{z_i\in \calZ} z_i^{\top}\theta^{\ast}$.
Let $\calC := \{\theta\in \mathbb{R}^d: \exists i \text{ s.t. }
\theta^{\top}(z_1 - z_i) \leq 0\}$, i.e. $\theta\in \calC$ if and only if $z_1$ is not the best arm in the linear bandit instance $(\calX, \calZ, \theta)$.

We now recall the transportation lemma of \cite{kaufmann2016complexity}.
Under a $\delta$-PAC strategy for finding the best arm for the bandit instance $(\calX, \calZ, \theta^{\ast})$, let $T_i$ denote the random variable which is the number of times arm $i$ is pulled.
In addition let $\nu_{\theta, i}$ denote the reward distribution of the $i$-th arm of $\calZ$, i.e. $\nu_{\theta, i} = \mc{N}(z_i^{\top}\theta, 1)$.
Then for any $\theta\in \calC$ we have that
\begin{equation*}
    \sum_{i=1}^n \E[T_i] KL(\nu_{\theta^{\ast}, i}, \nu_{\theta, i}) \geq \log(1/2.4\delta).
\end{equation*}
In particular, $\sum_{i=1}^n \E[T_i] \geq \sum_{i=1}^n t_i$ for any $\mathbf{t}
:= (t_1, \cdots, t_n)$ which is a feasible solution of the optimization problem,
\begin{align*}
    &\min \sum_{i=1}^n t_i\quad
    \text{subject to } \min_{\theta\in \calC}\sum_{i=1}^n t_i KL(\nu_{\theta^{\ast},i}|| \nu_{\theta,i}) \geq \log(1/2.4\delta).
\end{align*}
Taking $\mathbf{t}^{\ast}$ to be an optimal solution to the previous problem, note that
\begin{align*}
    \min_{\theta \in \calC} \sum_{i=1}^n \frac{t^{\ast}_i}{\sum_{j=1}^n t^{\ast}_j} KL(\nu_{\theta^{\ast},i}|| \nu_{\theta,i})
    \geq \frac{\log(1/2.4\delta)}{\sum_{j=1}^n t^{\ast}_j}
    \geq \frac{\log(1/2.4\delta)}{\sum_{j=1}^n \E[T_j]}
\end{align*}
In particular, since $\sum_{i=1}^n \frac{t^{\ast}_i}{\sum_{j=1}^n t^{\ast}_j} = 1$, we see that
\[\max_{\lambda\in \simplex_n}\min_{\theta \in \calC} \sum_{i=1}^n \lambda_i KL(\nu_{\theta^{\ast},i}|| \nu_{\theta,i})\geq \frac{\log(1/2.4\delta)}{\sum_{i=1}^n \E[T_i]}.\]
Rearranging, we see that
\begin{align}\label{eqn:lowerboundT}
    \sum_{i=1}^n \E[T_i] \geq \log(1/2.4\delta) \min_{\lambda\in \simplex_n}\max_{\theta \in \calC} \frac{1}{\sum_{i=1}^n \lambda_i KL(\nu_{\theta^{\ast}, i }|| \nu_{\theta, i})}.
\end{align}
 Now for $j\neq 1$, $\lambda \in \simplex^{n}$ and $\epsilon > 0$,  define
 \begin{equation*}
     \theta_j(\epsilon, \lambda) = \theta^{\ast} - \frac{(y_j^{\top}\theta^{\ast}+\epsilon) A(\lambda)^{-1}y_j}{y_j^{\top} A(\lambda)^{-1} y_j}.
 \end{equation*}
where $A(\lambda) := \sum_{i=1}^n \lambda_i x_i x_i^{\top}$ and $y_j = z_1 -
z_j$. Note that $y_{j}^{\top} \theta_j(\epsilon, \lambda) = -\epsilon < 0$ which
implies that $\theta_j\in \calC$. Also, the KL-divergence is given by
\begin{align*}
    KL(\nu_{\theta^{\ast}, i }|| \nu_{\theta_j(\epsilon, \lambda), i}) &= (x_i^{\top}(\theta^{\ast} - \theta_j(\epsilon, \lambda)))^2 \\
    &= y_j^{\top} A(\lambda)^{-1} \frac{(y_j^{\top}\theta^{\ast}+\epsilon)^2 x_i
    x_i^{\top}}{(y_j^{\top} A(\lambda)^{-1} y_j)^2} A(\lambda)^{-1} y_j.
\end{align*}
Hence, returning to \eqref{eqn:lowerboundT}, we have that
\begin{align*}
    \sum_{i=1}^n \E[T_i] &\geq \log(1/2.4\delta) \min_{\lambda\in \simplex_n}\max_{\theta \in \calC} \frac{1}{\sum_{i=1}^n \lambda_i KL(\nu_{\theta^{\ast}}|| \nu_{\theta})} \\
    &\geq \log(1/2.4\delta)\min_{\lambda\in \simplex_n}\max_{j=2,\cdots,m} \frac{1}{\sum_{i=1}^n \lambda_i KL(\nu_{\theta^{\ast}, i }|| \nu_{\theta_j(\epsilon, \lambda), i})}\\
    &\geq  \log(1/2.4\delta)\min_{\lambda\in \simplex_n}\max_{j=2,\cdots,m} \frac{(y_j^{\top} A(\lambda)^{-1} y_j)^2}{(y_j^{\top}\theta^{\ast}+\epsilon)^2 y_j^{\top} A(\lambda)^{-1}(\sum_{i=1}^n \lambda_i x_i x_i^{\top})  A(\lambda)^{-1} y_j}\\
    &= \log(1/2.4\delta) \min_{\lambda\in \simplex_n}\max_{y\in \calY^{\ast}(\calZ)} \frac{y_j^{\top} A(\lambda)^{-1} y_j}{(y_j^{\top}\theta^{\ast}+\epsilon)^2 }
\end{align*}
where in the second to last line we used the fact that $\sum_{i=1}^n \lambda_i x_i x_i^{\top} = A(\lambda)$. Letting $\epsilon\rightarrow 0$ establishes the result.

{\noindent \bf Remark:}    Note that  $\theta_j = \argmin_{\theta\in \mathbb{R}^d} \|\theta - \theta^{\ast}\|^2_{A(\lambda)}  \text{ subject to }y_j^{\top}\theta = -\epsilon$.
\end{proof}

\section{Proof of Proposition~\ref{prop:fixedLowerBound}}\label{sec:fixedLowerBound}
\begin{proof}
    Assume $d$ is even and each $\epsilon_t \sim \mc{N}(0,1)$. Fix some $\alpha \in (0,1)$ which will depend on $\gamma$ in a clear way momentarily, and consider an instance where
$\calX = \calZ = \{e_i\}_{i=1}^{d/2}\cup \{\cos(\alpha)e_i+\sin(\alpha)e_{d/2+i}\}_{i=1}^{d/2}\}$ where $e_i$ is the $i$-th standard basis vector.

If an algorithm is $\delta$-PAC, and takes $N_i$ samples from arm $i$, then for any $j\leq d/2$ it will be able to distinguish between $\theta = z_j$  and $\theta = z_{j+d/2}$. By standard Le Cam arguments \cite{Tsybakov2004} this hypothesis test requires $N_j+N_{j+d/2} \geq \frac{c\log(1/\delta)}{(1-\cos(\alpha))^2}$ for some universal constant $c > 0$.
Because $(1-\cos(\alpha))^2 \approx \alpha^{4}/4$ and these inequalities must hold for all $j=1,\dots, d/2$ simultaneously for the single static allocation, we obtain the result.
\end{proof}

\section{Proof of Lemma~\ref{lem:interpLemma}}\label{sec:interpLemma}
\begin{proof}
    \begin{align*}
    \rho(\calY) &= \min_{\lambda\in \simplex_{|\calX|}}\max_{y\in \calY}
    \|y\|_{(\sum_{x\in\calX} \lambda_x x x^\top)^{-1}}^2\\
       &= \frac{1}{\gamma_{\calY}^2}\min_{\lambda\in
           \simplex_{|\calX|}}\max_{y\in \calY} \|y \gamma_{\calY}\|^2_{(\sum_{x\in\calX} \lambda_x x x^\top)^{-1}}\\
       &\leq \frac{1}{\gamma_{\calY}^2}\min_{\lambda\in
           \simplex_{|\calX|}}\max_{x\in \conv(\calX\cup -\calX)} \|x\|^2_{(\sum_{x\in\calX} \lambda_x x x^\top)^{-1}}\\
       &= \frac{1}{\gamma_{\calY}^2}\min_{\lambda\in
           \simplex_{|\calX|}}\max_{x\in \calX} \|x\|^2_{(\sum_{x\in\calX} \lambda_x x x^\top)^{-1}}
    \end{align*}
    The third equality follows from the fact that the maximum value of a convex function on a convex set must occur at a vertex.
    By the celebrated Kiefer-Wolfowitz theorem for G-optimal design \cite{pukelsheim2006optimal}, $\min_{\lambda\in \simplex_{|\calX|}}\max_{x\in \calX} \|x\|^2_{(\sum \lambda_x xx^{\top})^{-1}} = d$ so we see that $\rho(\calY) \leq d / \gamma_{\calY}^2$.
    For a lower bound, note that
    \begin{align*}
    \min_{\lambda\in \simplex_{\calX}}\max_{y\in \calY}  \|y\|^2_{(\sum_{x\in\calX} \lambda_x x x^\top)^{-1}}
    &\geq \min_{\lambda\in \simplex_{\calX}}\max_{y\in \calY} \sigma_{\min}((\sum_{x\in\calX} \lambda_x x x^\top)^{-1})\|y\|_2^2\\
    &= \min_{\lambda\in \simplex_{\calX}}\max_{y\in \calY} \|y\|_2^2/\sigma_{\max}((\sum_{x\in\calX} \lambda_x x x^\top)^{-1})
    \end{align*}
    where $\sigma_{\max}$ and $\sigma_{\min}$ are respectively the largest and
    smallest eigenvalue operators of a matrix. Since $\sigma_{\max}(\sum_{i=1}^n \lambda_i x_i x_i^{\top}) \leq \max_{x\in\calX} \|x\|_2$, we have that
    $\rho(\calY(S_t)) \geq \max_{y\in \calY(S_t)} \|y\|_2^2/( \max_{x\in\calX} \|x\|_2)$. The final statement in the case of a singleton is also known as Elfving's Theorem, see Section 2.14 in \cite{pukelsheim2006optimal}
\end{proof}

\section{Experiment Details}\label{sec:implementation_details}
In this section, we provide further details on the implementation of each algorithm. Each experiment was repeated 20 times with the mean sample complexity is reported and error bars representing the standard error are plotted. Noise in the observations was generated from a standard normal distribution as described in the text. Simulations were implemented in Python 3 and parallelized on an Intel(R) Xeon(R) CPU E5-2690.

For each algorithm that requires computing a design $\lambda$ from an optimization of the form $\min_{\lambda\in \simplex_{\calX}}\max_{s\in \calS} \|s\|^2_{(\sum_{x} \lambda_x xx^\top)^{-1}}$ for $\calS\subset \mathbb{R}^d$, i.e. RAGE, XY-static, XY-oracle, and ALBA, we used a Franke-Wolfe algorithm \cite{jaggi2013revisiting} with constant step-size $2/(k+2)$ ($k$ being the iteration counter).
The algorithm was run until the relative change in $\lambda$ with respect to the $\ell_2$ norm was less than $.01$ or 1000 iterations were reached. Any values of $\lambda < 10^{-5}$ were then thresholded to 0 and $\lambda$ was scaled to sum to 1.

\begin{itemize}[topsep=0pt, partopsep=0pt, labelindent=0pt, labelwidth=0pt, leftmargin=10pt]
\item $\calX\calY$-Adaptive~\cite{soare2014best}: This algorithm requires a parameter $\alpha$ that governs the length of each adaptive phase.
We follow the simulations in~\cite{soare2014best} and let $\alpha=0.1$.
We remark that the algorithm given in the paper implements a greedy update to select arms in contrast to rounding the optimal allocation as is considered in the analysis.
We implement the greedy arm selection procedure to match the simulations in the paper.
It is worth noting that in several of the recent linear bandit papers that have implemented this algorithm, the active arm set has been reset at the conclusion of a phase before discarding arms.
We do not reset the arm set at the conclusion of a phase to match what was done in~\cite{soare2014best}.
Finally, in the confidence interval, we include the phase index and not the number of samples since we only need to union bound over when it is evaluated.
\item $\calX\calY$-Static and $\calX\calY$-Oracle: To implement each allocation, we compute the optimal design on the set $\calY(\calZ)$ for the static strategy and the set $\calY^{\ast}(\calZ)$ normalized by the gaps for the oracle.
Each algorithm proceeds in phases drawing $v^{t}$ samples from the allocation and pulling the selected arms, where $v$ was optimized in the range $(1,2)$ for performance, and the stopping condition as discussed in Section~\ref{sec:leastSquaresReview} is evaluated at the end of each phase $t$.
\item LinGapE~\cite{xu2018fully}:
We run this algorithm with a regularizer on the least squares estimator of $\lambda=1$ following the implementation given in the paper.
LinGapE is designed to find an $\varepsilon$ good arm.
We let $\varepsilon=0$ to ensure the optimal arm is identified.
We note that in the code the authors of~\cite{xu2018fully} provided with their paper, $\varepsilon$ was set to the minimum gap.
Since this value is unknown a priori, we do not follow this approach and as a result our simulations may not match with the simulations in~\cite{xu2018fully} for identical problem instances.
Moreover, the simulations in the paper apply a greedy arm selection strategy that deviates from the algorithm that is analyzed.
We instead implement the LinGapE algorithm in the form that it is analyzed.
\item ALBA~\cite{tao2018best}: This algorithm is parameter free and we implement the $\calY$-ElimTil sub-procedure following the paper since it gives improved empirical results compared to the $\calX$-ElimTil sub-procedure that provides identical theoretical results.
\item RAGE: To compute the discrete allocation given a design, we use the rounding procedure discussed in Section~\ref{sec:rounding} from~\cite{pukelsheim2006optimal}. For the Benchmark example, and uniform points on a sphere we used $\epsilon = 1/5$, for the many arms example, we used $\epsilon = 1/50$ and for the case of uniform points on a sphere, we used $\epsilon = 1/3.5$.
The algorithm we propose is computationally efficient since there is at most $\lfloor\log_2(1/\Delta_{\min})\rfloor$ phases and each phase only requires solving a convex optimization to obtain the design, an efficient rounding procedure, and solving a least squares problem.
The time required between each pull is negligible.
\end{itemize}

\end{document}